%% file: iclr2027_conference.tex
\documentclass{article}
\usepackage{iclr2027_conference,times}
\usepackage{titletoc} % Appendix contents only.
\usepackage{amsmath,amsfonts,bm}
\usepackage{hyperref,url,booktabs,tabularx,array,xcolor,placeins,graphicx,multirow,wrapfig}
\definecolor{placeholder}{RGB}{175,25,35}
\definecolor{draftblue}{RGB}{30,65,140}

\definecolor{guess}{RGB}{130,75,0}

\usepackage[normalem]{ulem}
\usepackage{capt-of}

\newtheorem{theorem}{Theorem}[section]

\newtheorem{lemma}[theorem]{Lemma}
\newtheorem{corollary}[theorem]{Corollary}

\title{Scale-Split Neural Operator for Memory- and Data-Efficient 3D Turbulence Prediction}

\author{%
\makebox[\dimexpr\textwidth-2\tabcolsep\relax][c]{%
Shaoxiang Qin$^{1,2}$, Yucheng Zhao$^{1}$, Zongyi Li$^{3}$, Liangzhu Leon Wang$^{4}$, Xiongye Xiao$^{1}$\thanks{Corresponding author: Xiongye Xiao \texttt{(xxiao9@utk.edu)}}}\\[2pt]
\makebox[\dimexpr\textwidth-2\tabcolsep\relax][c]{%
$^{1}$University of Tennessee, Knoxville \qquad $^{2}$McGill University}\\
\makebox[\dimexpr\textwidth-2\tabcolsep\relax][c]{%
$^{3}$New York University \qquad $^{4}$Concordia University}%
}
\iclrfinalcopy
\begin{document}
\maketitle
\lhead{Preprint}

\begin{abstract}
Neural surrogates have emerged as fast alternatives to the numerical simulation of three-dimensional turbulence.
However, training them at high resolution remains challenging, since the memory of full-field models grows with the resolution.
In addition, full-resolution training data are expensive to simulate and store, and therefore scarce.
We introduce \emph{ScaleSplit-NO} (Scale-Split Neural Operator), which exploits the scale structure of turbulence with two neural operators: a Parent predicts the global coarse field at the next time step, and a Child predicts full-resolution local patches conditioned on this prediction.
Neither model operates on the full-resolution field.
The Child is pretrained alone and then attached to the Parent's coarse prediction through zero-initialized connections.
On two complex high-resolution turbulence benchmarks, ScaleSplit-NO surpasses all competing baselines in both prediction accuracy and data efficiency.
On the higher-resolution dataset JHTDB256 ($256^3$), its normalized mean squared error (NMSE) is 53\% lower than that of the strongest baseline, and its training memory is 79\% lower than that of the most memory-efficient baseline.
We further demonstrate its effectiveness for urban wind prediction in a real district of Montreal on a $500\times150\times500$ grid, reducing one-step NMSE by 65.8\% relative to the baseline. 
Moreover, swapping in a Parent trained on additional coarse fields improves prediction without retraining the Child, providing further accuracy gains at a small storage cost.
\end{abstract}

\input{sections/introduction}

\input{sections/related_work}
\input{sections/method}

\input{sections/experiments}
\input{sections/conclusion}

\bibliography{iclr2027_conference}
\bibliographystyle{iclr2027_conference}

\appendix
\clearpage
\startcontents[appendixoutline]
\begingroup
\color{black}
\hypersetup{linkcolor=black,citecolor=black,urlcolor=black,pdfborder={0 0 0}}
\section*{Appendix}
Appendix~\ref{app:related} reviews the broader literature on neural operators. Appendix~\ref{app:results} presents prediction visualizations and additional results. Appendix~\ref{app:protocol} describes the datasets and evaluation protocol, and Appendix~\ref{app:implementation} details the architectures, training procedures, baselines, and resource measurements. Appendix~\ref{app:theory} gives the supplementary theory for initialization, patch assembly, error decomposition, and Parent replacement.

\subsection*{Appendix Contents}
\fontsize{8.95}{17.5}\selectfont
\setlength{\parskip}{0pt}
\printcontents[appendixoutline]{}{1}{\setcounter{tocdepth}{2}}
\endgroup
\clearpage
\input{sections/appendix/visualizations}
\input{sections/appendix/additional_results}

\input{sections/appendix/datasets}

\input{sections/appendix/implementation}

\input{sections/appendix/theory}

\stopcontents[appendixoutline]
\end{document}

%% file: sections/introduction.tex
\section{Introduction}
\label{sec:introduction}\label{sec:intro}
Numerical simulation has long been the primary tool of turbulence research \citep{rogallo1984numerical}, and a central task of turbulence research is to compute how a turbulent field evolves from a given state.
In three dimensions, this is an expensive computation: a single high-resolution trajectory can take hours to days to simulate \citep{moin1998direct}, and every new initial state requires a simulation of its own.
Recent advances in neural operators \citep{lu2021learning,FNO,kovachki2023neural} have made learned surrogates a promising alternative, as they learn the evolution from existing trajectories and return a prediction in a single forward pass.
Training such surrogates at high resolution, however, remains a significant challenge.
A model that processes the whole field stores activations for every grid point, and recent 3D models at $96^3$ to $128^3$ report training on four A100 GPUs for up to a day \citep{P3d,EddyFormer}.
Moreover, the dimension of the field to be predicted far exceeds the number of available samples: a single high-resolution field contains tens of millions of values, and common datasets contain only hundreds to thousands of such fields \citep{li2008public,takamoto2022pdebench,the_well}.
Generating more is costly not only in simulation time but also in storage and transfer.

Existing surrogates are trained end to end, and a single model predicts all scales at once from the full-resolution field \citep{ronneberger2015u,FNO,MG-TFNO,ReViT,EddyFormer,P3d}.
They therefore face both costs, in memory and in data.
Models that take the whole field as input obtain a single sample from each field \citep{ronneberger2015u,FNO,ReViT,EddyFormer}.
When the computation is split into patches that are processed in parallel, the activation memory is distributed over several GPUs, but every update still requires the whole field \citep{MG-TFNO}.
P3D recovers the information outside a patch with a global context network trained on the whole domain \citep{P3d}.
A crop-only variant of P3D, without the context network, still needs large crops, since the accuracy drops as the crops become smaller \citep{P3d}.
Each of these approaches thus either takes the full-resolution field as input at a high memory cost or loses accuracy without the full-resolution field (Figure~\ref{fig:accmem}).

\begin{wrapfigure}{r}{0.6\linewidth}
\vspace{0pt}
\centering
\includegraphics[width=\linewidth]{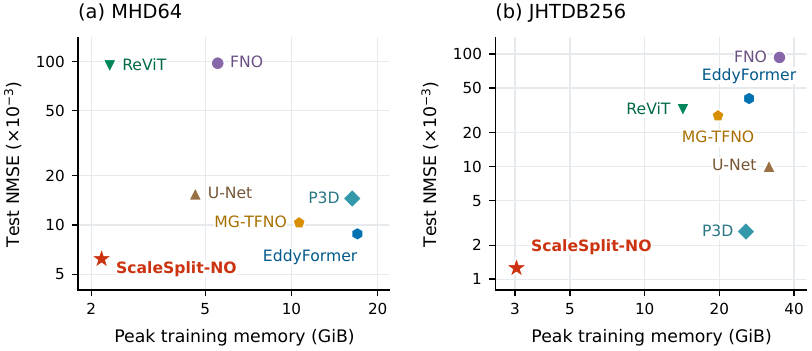}\par
\vspace{-10pt}
\caption{\textbf{Accuracy versus training memory.}
Test error (NMSE) with limited training data against the peak GPU memory during training (Table~\ref{tab:main}).
Lower left is better.}
\label{fig:accmem}
\end{wrapfigure}
In this work, we exploit the scale structure of turbulence to avoid this trade-off.
Turbulent motion contains structures of many sizes.
Large structures are correlated across broad regions of the domain, and small structures only over short distances \citep{pope2000turbulent}.
The small scales at one location are shaped by the large-scale motion around them and only weakly by the small scales elsewhere in the domain \citep{tennekes1975eulerian}.
Predicting the small scales in a patch therefore mainly requires the patch itself and the large-scale motion around the patch.
This large-scale motion varies slowly in space and can be represented on a much coarser grid.
The large scales can hence be learned on a coarse grid, and the small scales on local patches, so that no model needs the whole field at full resolution.

To this end, we propose ScaleSplit-NO (Scale-Split Neural Operator), which consists of two separately supervised neural operators.
The Parent is a neural operator on the coarse grid and predicts the coarse field of the next time step.
The Child is a deeper and wider neural operator shared across all patch positions and predicts the complete patch of the next time step from the input patch and a condition.
This condition is the Parent's coarse prediction, interpolated to the fine grid and cropped at the patch location, and it supplies the large-scale motion that the patch itself does not contain.
Coarsening removes small scales that also affect the evolution of the large scales \citep{leonard1975energy}, so the coarse prediction carries errors.
The Child outputs the complete patch and can therefore correct these errors.
The overlapping patch predictions are finally assembled into the full field.
The two models are trained separately, and the Child is trained in two stages.
The Parent is trained on coarse fields, and the Child is pretrained on patches without a condition.
The Child is then fine-tuned with the condition attached through zero-initialized connections \citep{ControlNet}, so that fine-tuning starts from the pretrained Child.
At inference, the condition comes from the coarse prediction of the Parent, which carries errors.
During fine-tuning, the condition is therefore also computed from the coarse predictions of the frozen Parent and not from the coarse target.

In addition to reducing the memory, the split makes better use of limited training data.
Under spatial homogeneity \citep{batchelor1953theory}, every patch position provides an example of the same local dynamics, so a single frame already yields many examples for the Child.
Each frame, in contrast, contains only one example of the large-scale motion for the Parent.
Each model can therefore be given a capacity and a training schedule that match its data.
Restricting the Child to its patch and the coarse field builds the locality of the small scales into the model as an inductive bias.
A further benefit of separate training is that it allows the Parent to be trained on additional coarse fields alone.
Since coarse fields take only a small fraction of the storage of full-resolution fields, additional coarse fields address the shortage of data for the Parent efficiently.

Our contributions are summarized as follows:
\begin{itemize}
\item We propose ScaleSplit-NO, which exploits the multiscale structure of turbulence by coupling global coarse forecasts with local full-resolution predictions, so that neither of its two neural operators processes the full-resolution field and its training memory does not grow with the resolution.
The scale split also serves as an inductive bias, so ScaleSplit-NO makes better use of limited training data.
\item On two complex high-resolution turbulence benchmarks, ScaleSplit-NO surpasses all competing baselines in prediction accuracy under both low-data and full-data settings.
At $256^3$, it lowers the error by 53\% relative to the strongest baseline and needs 79\% less training memory than the most memory-efficient baseline.
\item Separate training offers a storage-efficient way to improve accuracy.
The Parent can be retrained on additional coarse fields without retraining the Child.
For the same storage, these coarse fields improve the accuracy of ScaleSplit-NO more than full-resolution fields.
\end{itemize}

%% file: sections/related_work.tex
\section{Related work}
\label{sec:related}
\textbf{Neural surrogates for turbulent flows.}
Machine learning has been used to assist turbulence solvers, for instance through learned corrections that allow accurate simulation on coarser grids \citep{kochkov2021machine} or improve spectral solvers at fixed resolution \citep{DBLP:journals/tmlr/DresdnerKNZSBH23}, and to replace them by learning the flow evolution directly from data.
Such surrogates have been built from convolutional networks \citep{wang2020towards,stachenfeld2021learned}, recurrent networks acting on compressed latent states \citep{nakamura2021convolutional}, and neural operators such as the Fourier neural operator (FNO) \citep{FNO}.
Notably, \citet{stachenfeld2021learned} show that a convolutional simulator can be more accurate than a numerical solver at the same coarse resolution.
The wide range of scales in turbulence has since become the focus of more recent models.

\textbf{Multiscale modeling.}
Several neural operators add multiscale or local structure, through multiwavelet bases \citep{gupta2021multiwavelet}, U-Net pathways alongside Fourier layers \citep{wen2022u,li2023long}, or local differential and integral kernels \citep{local-kernel}.
LOGLO-FNO adds a parallel branch of local spectral convolutions and a high-frequency propagation module to recover the high frequencies that global Fourier layers tend to miss \citep{LOGLO-FNO}.
EddyFormer, designed for three-dimensional turbulence, splits the flow into a large-scale stream with global attention and a subgrid-scale stream with local convolutions \citep{EddyFormer}.
\citet{MG-TFNO} process patches of the field in parallel, each with progressively coarser global inputs, and P3D inserts a global context model between the local representations of its patches to include long-range dependencies \citep{P3d}.
However, these models capture multiscale features by training end-to-end on full-resolution fields, which limits them when the turbulence is resolved at very high resolution.
ScaleSplit-NO instead supervises each scale separately, so that no model has to process the entire domain at full resolution during training.

%% file: sections/method.tex
\section{Method}
\label{sec:method}
Our goal is to train accurate surrogates of turbulent fields with a training memory that does not grow with the resolution and with limited training data.
Motivated by the scale structure of turbulence, we split the prediction between two models that operate on different spatial scales: a Parent on a coarse grid of the whole domain and a Child on full-resolution patches.
Consequently, no activations for the full-resolution field need to be stored, and each pair of frames provides many training patches.
Turbulent motion spans a continuous range of scales, from structures as large as the domain down to eddies a few grid cells wide \citep{pope2000turbulent}.
The large scales vary slowly in space and are well represented on a coarse grid, and the small scales require the full resolution but depend mainly on a local neighborhood.

The Parent predicts the coarse field of the next time step from that of the current one.
The Child predicts the complete patch of the next time step from the patch of the current one and from the Parent's coarse prediction cropped to the same location, which we call the condition.
Overlapping patch predictions are then assembled into the predicted field (Figure~\ref{fig:pipeline}).

\begin{figure}[t]
\centering
\includegraphics[width=\linewidth]{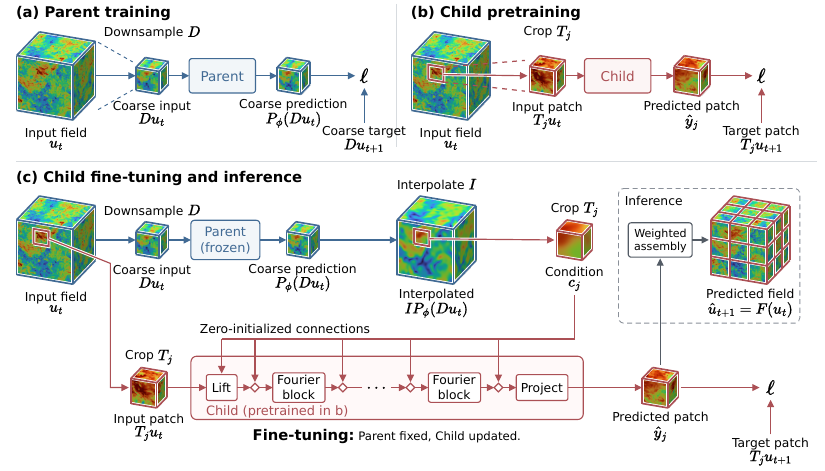}
\caption{\textbf{Separate supervision and composed prediction.}
(a) The Parent is trained to predict the coarse target $Du_{t+1}$ from the coarse input $Du_t$.
(b) The Child is pretrained to predict the target patch $T_ju_{t+1}$ from the input patch $T_ju_t$.
(c) The coarse prediction of the frozen Parent is interpolated and cropped to the condition $c_j$, which enters the Child through zero-initialized connections (diamonds).
Fine-tuning updates the backbone and these connections, and the loss $\ell$ is the only place where the target field enters.
At inference all weights are fixed, and the predicted patches $\hat y_j$ are assembled into the predicted field $\hat u_{t+1}=F(u_t)$.}
\label{fig:pipeline}
\end{figure}

\subsection{Problem formulation}
\label{sec:setup}\label{sec:formulation}\label{sec:task}
We consider a field $u_t\in\mathbb R^{q\times N^3}$ with $q$ physical channels on an $N^3$ grid of points (voxels) and its state $u_{t+1}$ one time interval later.
A simulated sequence of such fields is a trajectory, and each field in it is a frame.
Throughout, we employ two grids of different resolution.
A coarse grid of $R^3$ points, with $R<N$, carries the large scales, that is, the low spatial frequencies of the field, and $R$ is set by the highest frequency assigned to them.
Patches of $p^3$ points of the fine grid carry the small scales.
We connect the two grids through three linear operators.
$D$ truncates a fine field to its low frequencies and samples the result on the coarse grid, $I$ performs the inverse interpolation back to the fine grid, and $T_j$ extracts the patch at position $j$.
On a periodic domain the patches wrap around the boundary, and otherwise they are placed inside the domain.
We call $u_t$ the input field and $u_{t+1}$ the target field, $Du_t$ and $Du_{t+1}$ the coarse input and the coarse target, and $T_ju_t$ and $T_ju_{t+1}$ the input patch and the target patch.
The corresponding outputs of the models are the predicted field, the coarse prediction, and the predicted patch.
Our method is built from three maps: a Parent $P$ that predicts the coarse target, a Child $C$ that predicts the target patch, and an assembly $A$ that combines patch predictions into a field.
In this work, both the Parent and the Child are implemented as Fourier neural operators (FNOs) \citep{FNO}.
Their composition is a predictor $F$ from $u_t$ to $u_{t+1}$, evaluated by the normalized mean squared error over the full field (Section~\ref{sec:protocol}).
For multi-step prediction, the output is fed back as the next input.

\subsection{Coarse prediction as the condition}
\label{sec:composition}
The evolution of a patch depends on the large-scale motion around it, and without this context, the accuracy of patch-based models decreases as the patches become smaller.
Existing models either process the whole field during training or lose accuracy on crops alone.
We instead take the context from the Parent's coarse prediction, cropped to the location of the patch.
The coarse prediction is supervised directly, which allows the Parent to be trained and evaluated independently of the Child.
With sufficient training data, conditioning on the coarse input instead of the coarse prediction yields a higher error (Section~\ref{sec:ablation}).

The Parent network $G_\phi$ predicts the normalized change $Du_{t+1}-Du_t$ between frames from $Du_t$, and the change is rescaled and added back to the input,
\begin{equation}
 P_\phi(z)=z+\mathsf N_{\Delta}^{-1}\!\big(G_\phi(\mathsf N_z z)\big),\qquad z=Du_t,
 \label{eq:parent-decoding-main}
\end{equation}
with fixed normalization maps $\mathsf N_z$ and $\mathsf N_\Delta$, so that its output is a coarse field in physical units.
When whole coarse fields are too few to learn the large-scale dynamics, the Parent can also be trained on patches of the coarse grid, which gives it more training data from the same frames.
The predicted increments of these patches are assembled and then added to $Du_t$ (Appendix~\ref{app:physical-maps}).
On JHTDB256, the Parent is trained on patches of the coarse grid (Appendix~\ref{app:ablations-results}).
Each of these patches is still eight times as wide as a Child patch along each axis and supplies context that the Child does not see.

\subsection{Complete-patch prediction from an imperfect condition}
\label{sec:local-reconstruction}
The Child takes the input patch and the condition as inputs and predicts the complete target patch.
The Child treats the condition as one input among others and does not add a residual to the condition, since the coarse prediction is inexact and the fine-scale information in the patch lets the Child correct the coarse prediction.
This inexactness arises in part from the energy transfer between scales in turbulence \citep{leonard1975energy}.
The small structures removed by coarsening also affect the evolution of the large ones, and the coarse field alone therefore need not determine its next coarse state.
A model that observes only the coarse field then cannot resolve this ambiguity, however much coarse data it is trained on.

Inspired by ControlNet \citep{ControlNet}, we first pretrain the Child on patches alone, without a condition, and then fine-tune it with the condition attached through zero-initialized weights.
Fine-tuning thus starts from the pretrained Child and then updates both its backbone and the condition pathway.
The condition is itself an imperfect input of the same size as the patch.
A Child trained with the condition from the start attains a higher error (Section~\ref{sec:ablation}), possibly because the condition doubles the already redundant input channels and the Child may then learn correlations with the errors of the coarse prediction.
During fine-tuning, the condition is computed from the coarse predictions of the frozen Parent and not from the coarse target, so that the Child is trained with the same kind of condition as at inference.

\textbf{Prediction and condition pathway.}
For patch $j$, the condition and the prediction are
\begin{equation}
 c_j=T_jIP_\phi(Du_t),\qquad
 \widehat y_j=C_{\theta,\psi}(T_ju_t,\,c_j),
 \label{eq:composite-map}
\end{equation}
where $C_{\theta,\psi}$ is the Child, with backbone parameters $\theta$ shared across all patch positions, and $\psi$ are the parameters of the condition pathway.
We write $\bar x$ and $\bar c$ for the patch and the condition after normalization with training-set statistics.
The condition enters at the lifting layer and, through a pointwise linear map, after each Fourier block,
\begin{equation}
 h_0=A_x\bar x+A_c\bar c+b,\qquad
 h_\ell\leftarrow h_\ell+\beta_\ell(\bar c),
 \label{eq:lift}
\end{equation}
where $A_c$ and the pointwise maps $\beta_\ell$ belong to $\psi$.
With $\psi=0$ the pathway is inactive, and $C_{\theta,0}(x,c)=C^{\mathrm{pre}}_{\theta}(x)$ for every patch and every condition (Lemma~\ref{lem:zero}).

\textbf{Assembly.}
Because patches are predicted independently, their predictions disagree along shared boundaries.
We average overlapping predictions with Hann windows $w_j$ \citep{hann_window}, smooth weights that are largest at the patch center and decay towards zero at its faces,
\begin{equation}
 F(u_t)(x)=\sum_j a_j(x)\,\widehat y_j(x),\qquad
 a_j(x)=\frac{w_j(x)}{\sum_k w_k(x)},
 \label{eq:assembly}
\end{equation}
so that every voxel is dominated by the patches that contain it in their interior.
Since the weights sum to one, exact patch predictions assemble to the exact field (coverage and overlap in Appendix~\ref{app:implementation}).

\subsection{Separate supervision}
\label{sec:training}\label{sec:conditioning}\label{sec:objectives}
We train the Parent and the Child separately, in three stages: Parent training, Child pretraining, and Child fine-tuning.
Training them separately allows the capacity and the training schedule of each model to be matched to the amount of training data available to it.
Each frame of a trajectory gives the Child one training patch per patch position, thousands per trajectory, but gives the Parent only one coarse field, and consecutive frames are highly correlated.
The Child accordingly has twice the depth and width of the Parent and is trained for more epochs.
Section~\ref{sec:ablation} compares this configuration with joint training and with other Parent grids and Child patch sizes.
The Parent is trained with the 48 rotations and reflections of the cube as augmentation against overfitting to its few coarse fields, which is inexpensive on the coarse grid.
The Child is trained without augmentation in both pretraining and fine-tuning, as are all baselines.
Even so, with few training frames the Parent has few coarse fields to learn from, and the prediction improves when the Parent is trained on more of them (Section~\ref{sec:swap}).
Its training data are coarse fields, which take $R^3/N^3$ of the storage of a full field, 5.3\% on MHD64 and 1.6\% on JHTDB256, and coarse fields of additional frames can be stored at a small cost.
Since the Parent enters the Child only through $c_j$, a Parent trained on additional coarse fields can be swapped in without retraining the Child, and Section~\ref{sec:swap} compares this use of storage with storing more full-resolution fields.
All three stages minimize the normalized root mean squared error in normalized units, and Appendix~\ref{app:training} gives the three objectives.

\subsection{Training memory}
\label{sec:deployment}
For a fixed patch size, coarse grid, and batch size, the training memory of ScaleSplit-NO does not grow with the resolution of the field.
The Child processes $p^3$ voxels per patch and the Parent at most $R^3$ voxels per sample, and each stage is trained on its own.
For $N=256$, $p=16$, and a batch of 16 patches, one batch holds $1/256$ of a field.
Section~\ref{sec:accuracy} reports the measured training memory of all methods.

%% file: sections/experiments.tex
\section{Experiments}
\label{sec:experiments}\label{sec:protocol}
\textbf{Datasets.}
We consider two high-fidelity 3D turbulence datasets with complex multiscale features.
MHD64 is the magnetohydrodynamic turbulence of The Well \citep{the_well} with a spatial resolution of $64^3$.
Its input is more complex than a single velocity field, consisting of coupled density, velocity, and magnetic fields that change strongly from one frame to the next.
We train on eight trajectories of 100 frames and test on a new trajectory that starts from an initial condition not seen in training.
JHTDB256 is the forced isotropic turbulence of the Johns Hopkins Turbulence Database \citep{perlman2007data,li2008public}, generated by a high-accuracy direct numerical simulation, which we use at $256^3$.
At this resolution, the flow carries rich small-scale detail, and all baselines use relatively small configurations to be trained on a single 40 GB GPU.
We train on 400 frames of one sequence and test on later frames of the same sequence, so that the model predicts the future of the flow.
The low-data setting uses 1/8 of the training data, i.e., a single trajectory, and thus a single initial condition, of MHD64 and 50 frames of JHTDB256 (Appendix~\ref{app:protocol}).

\textbf{Baselines.}
The baselines include the classic U-Net \citep{ronneberger2015u} and FNO \citep{FNO}, as well as the more recent and advanced neural surrogates MG-TFNO \citep{MG-TFNO}, EddyFormer \citep{EddyFormer}, P3D \citep{P3d}, and ReViT \citep{ReViT}.
All of them are trained on the same data as ScaleSplit-NO.
Further details of the baselines are in Appendix~\ref{app:baselines}, and the configuration of ScaleSplit-NO is in Appendix~\ref{app:implementation}.

\textbf{Metric.}
We report the normalized mean squared error
$\mathrm{NMSE}=\frac1q\sum_{k=1}^{q}\frac{\|F(u_t)_k-u_{t+1,k}\|_2^2}{\|u_{t+1,k}\|_2^2}
$, where $k$ runs over the $q$ physical channels.
The one-step NMSE is the average error of a single prediction step.
The rollout NMSE is the average error along autoregressive rollouts, whose length on each dataset is chosen so that the flow loses a similar degree of correlation with its initial state.
Further details of the evaluation are in Appendix~\ref{app:protocol}.

\subsection{Main results}
\label{sec:accuracy}\label{sec:resources}
\textbf{Accuracy.}
Table~\ref{tab:main} summarizes the results.
ScaleSplit-NO has the lowest one-step and rollout error on both datasets, with 1/8 and with all of the training data.
On MHD64 its one-step error is 23\% lower than that of the strongest baseline, EddyFormer, with all eight trajectories and 30\% lower with a single one.
On JHTDB256 it is 53\% lower than that of the strongest baseline, P3D, with 1/8 as well as with all of the data, and its rollout error is 10\% and 35\% lower.
We believe this gain comes mainly from the smaller patches of ScaleSplit-NO, which yield many more training samples from the same data.
P3D also trains on crops, but uses large ones of $128^3$, since smaller crops lose more context \citep{P3d}.
Since the Parent provides the large-scale context, the Child can be trained on $16^3$ patches, and a field yields 512 times as many of these as $128^3$ crops.
Figure~\ref{fig:vis3d} shows one test pair from each dataset.
The predictions of ScaleSplit-NO reproduce the structures of the ground truth, including the wide range of scales in JHTDB256, and its errors are visibly smaller than those of the strongest baseline (energy spectra in Appendix~\ref{sec:diagnostics}).

\begin{table}[t]
\newcommand{\firstb}[1]{{\def\ULthickness{0.9pt}\uline{#1}}}\newcommand{\secondb}[1]{\setbox0=\hbox{#1}\rlap{\raisebox{-0.45ex}[0pt][0pt]{\hbox to \wd0{\rule{0.12em}{0.5pt}\xleaders\hbox{\kern0.08em\rule{0.12em}{0.5pt}}\hfill}}}\box0}
\definecolor{better}{HTML}{16366E}\definecolor{worse}{HTML}{8C1616}
\caption{\textbf{Test error (NMSE) and training memory.}
U-Net serves as the reference.
Parentheses give the change in error and memory relative to U-Net, \mbox{\protect\textcolor{better}{($\downarrow$ better)}} or \mbox{\protect\textcolor{worse}{($\uparrow$ worse)}}.
\textbf{Bold}: best result.
\protect\firstb{Thick underline}: best baseline.
\protect\secondb{Dashed underline}: second-best baseline.}
\label{tab:main}\label{tab:main-jhtdb}
\begin{center}\small\setlength{\tabcolsep}{3pt}\renewcommand{\arraystretch}{1.08}
\begin{tabular*}{\textwidth}{@{\extracolsep{\fill}}cl r@{\extracolsep{0pt}\,}l@{\extracolsep{\fill}\hspace{3pt}} r@{\extracolsep{0pt}\,}l@{\extracolsep{\fill}\hspace{3pt}} r@{\extracolsep{0pt}\,}l@{\extracolsep{\fill}\hspace{3pt}} r@{\extracolsep{0pt}\,}l@{\extracolsep{\fill}\hspace{3pt}} r@{\extracolsep{0pt}\,}l@{}}
\toprule
& & \multicolumn{4}{c}{NMSE, low data (1/8)} & \multicolumn{4}{c}{NMSE, full data} & \multicolumn{2}{c}{Training}\\
\cmidrule(lr){3-6}\cmidrule(lr){7-10}\cmidrule(l){11-12}
& & \multicolumn{2}{c}{One step} & \multicolumn{2}{c}{Rollout} & \multicolumn{2}{c}{One step} & \multicolumn{2}{c}{Rollout} & \multicolumn{2}{c}{Memory}\\
& Method & \multicolumn{2}{c}{$\times10^{-3}$} & \multicolumn{2}{c}{$\times10^{-2}$} & \multicolumn{2}{c}{$\times10^{-3}$} & \multicolumn{2}{c}{$\times10^{-2}$} & \multicolumn{2}{c}{GiB}\\
\midrule
\multirow{7}{*}{\rotatebox[origin=c]{90}{MHD64}} & U-Net & 15.5 &  & 2.83 &  & \secondb{5.97} &  & 1.30 &  & \secondb{4.6} & \\
 & FNO & 97.6 & \textcolor{worse}{($\uparrow$529\%)} & 17.2 & \textcolor{worse}{($\uparrow$507\%)} & 29.5 & \textcolor{worse}{($\uparrow$394\%)} & 4.96 & \textcolor{worse}{($\uparrow$283\%)} & 5.5 & \textcolor{worse}{($\uparrow$20\%)}\\
 & MG-TFNO & \secondb{10.3} & \textcolor{better}{($\downarrow$33\%)} & \secondb{2.06} & \textcolor{better}{($\downarrow$27\%)} & 5.98 & (0\%) & \secondb{1.27} & \textcolor{better}{($\downarrow$2\%)} & 10.6 & \textcolor{worse}{($\uparrow$130\%)}\\
 & EddyFormer & \firstb{8.83} & \textcolor{better}{($\downarrow$43\%)} & \firstb{1.99} & \textcolor{better}{($\downarrow$30\%)} & \firstb{5.88} & \textcolor{better}{($\downarrow$2\%)} & 1.32 & \textcolor{worse}{($\uparrow$2\%)} & 17.0 & \textcolor{worse}{($\uparrow$268\%)}\\
 & P3D & 14.5 & \textcolor{better}{($\downarrow$6\%)} & 2.60 & \textcolor{better}{($\downarrow$8\%)} & 6.00 & (0\%) & \firstb{1.23} & \textcolor{better}{($\downarrow$5\%)} & 16.3 & \textcolor{worse}{($\uparrow$253\%)}\\
 & ReViT & 93.8 & \textcolor{worse}{($\uparrow$505\%)} & 16.7 & \textcolor{worse}{($\uparrow$490\%)} & 58.5 & \textcolor{worse}{($\uparrow$880\%)} & 10.1 & \textcolor{worse}{($\uparrow$679\%)} & \firstb{2.3} & \textcolor{better}{($\downarrow$50\%)}\\
 & \textbf{ScaleSplit-NO} & \textbf{6.20} & \textcolor{better}{($\downarrow$60\%)} & \textbf{1.43} & \textcolor{better}{($\downarrow$49\%)} & \textbf{4.52} & \textcolor{better}{($\downarrow$24\%)} & \textbf{1.05} & \textcolor{better}{($\downarrow$19\%)} & \textbf{2.2} & \textcolor{better}{($\downarrow$53\%)}\\
\midrule
\multirow{7}{*}{\rotatebox[origin=c]{90}{JHTDB256}} & U-Net & \secondb{10.1} &  & 14.0 &  & \secondb{6.77} &  & 8.85 &  & 31.7 & \\
 & FNO & 93.3 & \textcolor{worse}{($\uparrow$826\%)} & 105 & \textcolor{worse}{($\uparrow$655\%)} & 40.6 & \textcolor{worse}{($\uparrow$500\%)} & 43.7 & \textcolor{worse}{($\uparrow$393\%)} & 35.0 & \textcolor{worse}{($\uparrow$10\%)}\\
 & MG-TFNO & 28.3 & \textcolor{worse}{($\uparrow$181\%)} & 50.6 & \textcolor{worse}{($\uparrow$263\%)} & 24.8 & \textcolor{worse}{($\uparrow$267\%)} & 15.8 & \textcolor{worse}{($\uparrow$79\%)} & \secondb{19.8} & \textcolor{better}{($\downarrow$38\%)}\\
 & EddyFormer & 40.3 & \textcolor{worse}{($\uparrow$300\%)} & \secondb{10.3} & \textcolor{better}{($\downarrow$26\%)} & 37.3 & \textcolor{worse}{($\uparrow$452\%)} & \secondb{7.49} & \textcolor{better}{($\downarrow$15\%)} & 26.4 & \textcolor{better}{($\downarrow$17\%)}\\
 & P3D & \firstb{2.65} & \textcolor{better}{($\downarrow$74\%)} & \firstb{1.64} & \textcolor{better}{($\downarrow$88\%)} & \firstb{2.20} & \textcolor{better}{($\downarrow$67\%)} & \firstb{1.48} & \textcolor{better}{($\downarrow$83\%)} & 25.6 & \textcolor{better}{($\downarrow$19\%)}\\
 & ReViT & 32.1 & \textcolor{worse}{($\uparrow$218\%)} & 17.0 & \textcolor{worse}{($\uparrow$21\%)} & 31.7 & \textcolor{worse}{($\uparrow$368\%)} & 15.7 & \textcolor{worse}{($\uparrow$77\%)} & \firstb{14.3} & \textcolor{better}{($\downarrow$55\%)}\\
 & \textbf{ScaleSplit-NO} & \textbf{1.25} & \textcolor{better}{($\downarrow$88\%)} & \textbf{1.47} & \textcolor{better}{($\downarrow$89\%)} & \textbf{1.03} & \textcolor{better}{($\downarrow$85\%)} & \textbf{0.959} & \textcolor{better}{($\downarrow$89\%)} & \textbf{3.0} & \textcolor{better}{($\downarrow$90\%)}\\
\bottomrule
\end{tabular*}
\end{center}
\end{table}

\textbf{Training memory.}
The training memory of ScaleSplit-NO depends on the patch size and the coarse grid, not on the resolution of the field (Section~\ref{sec:deployment}).
On MHD64 it is close to that of the lightest baseline (Figure~\ref{fig:accmem}).
At $256^3$ it is 3.0 GiB, 79\% lower than that of the lightest baseline.
Appendix~\ref{app:resources} gives the training memory at different batch sizes and the training and inference times.

\begin{figure}[t]
\centering
\includegraphics[width=\linewidth]{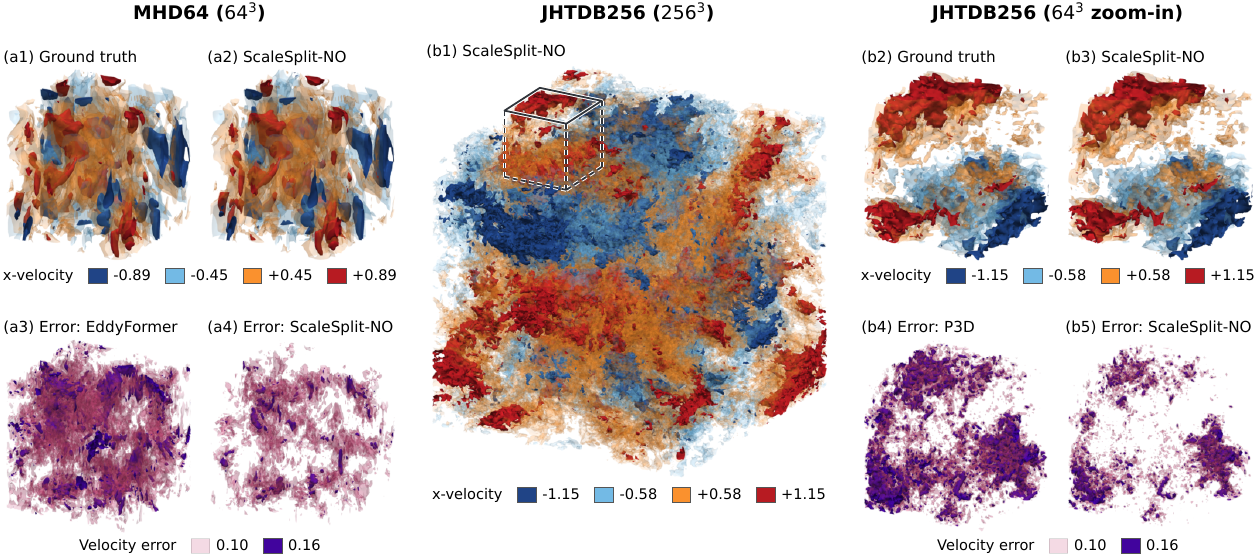}
\caption{\textbf{Visualization of 3D turbulence predictions and errors.}
The figure shows one test pair of each dataset for one-step prediction in the low-data setting.
Each panel draws a 3D field as surfaces of constant value (isosurfaces) at the levels in its legend.
Velocity panels show the $x$-velocity of the ground truth and of the prediction of ScaleSplit-NO.
Error panels show the magnitude of the velocity error \mbox{$\|\hat u_{t+1}-u_{t+1}\|$} of ScaleSplit-NO and of the strongest baseline of each dataset.}
\label{fig:vis3d}
\end{figure}

\textbf{Data efficiency.}
\label{sec:dataeff}\label{sec:swap}%
High-resolution 3D simulation data are expensive to generate and to store, and ScaleSplit-NO needs much less of them (Figure~\ref{fig:dataeff}(a1, a2)).
On MHD64, its error with a single trajectory is within 6\% of that of the best baseline trained on all eight.
On JHTDB256, its error with 1/8 of the data is even 43\% lower than that of the best baseline trained on all of the data.

\begin{figure}[t]
\centering
\includegraphics[width=\linewidth]{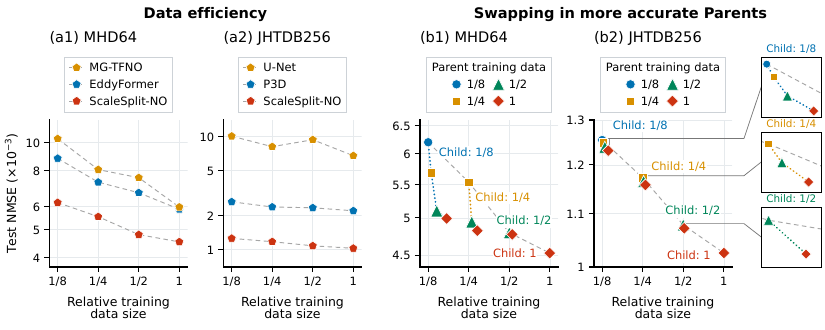}
\caption{\textbf{Test error versus the amount of stored training data.}
(a1, a2) Test error of ScaleSplit-NO and the two strongest baselines of each dataset for different amounts of training data.
(b1, b2) Test error of ScaleSplit-NO when a Child trained on the labeled fraction of the data is kept fixed and combined with Parents trained on additional coarse fields.
In (b1, b2), gray dashed lines connect models whose Parent and Child are trained on the same data, and the relative training data size also counts the additional coarse fields.}
\label{fig:dataeff}\label{fig:swap}
\end{figure}

If only full fields are stored, each gives the Child thousands of patches but the Parent a single coarse field, so the Parent is the one that runs short of data.
Because the two scales are supervised separately, ScaleSplit-NO can store additional coarse fields to train a more accurate Parent.
We train a Parent on the additional coarse fields and combine it with the existing Child at inference, without retraining the Child (Figure~\ref{fig:dataeff}(b1, b2)).
On MHD64, starting from one trajectory, the coarse fields of all eight trajectories add only 4.6\% of the full-data storage and lower the error by 19.5\%, below every baseline trained on all of the data, whereas one more full trajectory adds 12.5\% and lowers the error by only 10.7\%.
On JHTDB256 the gain is small, since its coarse fields are tiny and isotropic turbulence has little large-scale structure, but per unit of storage coarse fields remain more efficient.

\textbf{Real-world application to urban wind.}
\label{sec:cityffd}%
Beyond the two benchmarks, we apply ScaleSplit-NO to a real-world turbulent flow, the wind in an urban district of Montreal, simulated on a $500\times150\times500$ grid \citep{mortezazadeh2022cityffd,qin2025modeling} (Appendix~\ref{app:cityffd}).
Unlike the two benchmarks, the domain is not periodic, and buildings deflect the wind and channel it through the streets.
As a reference, we train U-Net, which is commonly used for urban wind prediction \citep{xiang2021non}.
The one-step NMSE of ScaleSplit-NO is 65.8\% lower than that of U-Net, and its training memory is 2.8 instead of 27.1 GiB.

\subsection{Ablation studies}
\label{sec:ablation}
Unless noted otherwise, all ablations use 1/8 of the MHD64 data and vary the conditioning input of the Child, the training procedure, the sizes of the two models, and the overlap of the patches.
\begin{figure}[t]
\definecolor{worse}{HTML}{8C1616}
\begin{minipage}[t]{0.36\linewidth}
\centering
\raisebox{-\dimexpr\height+4pt\relax}{\includegraphics[width=125pt]{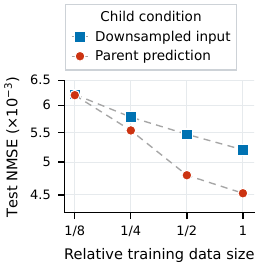}}
\caption{\textbf{Effect of the Parent's coarse prediction.}
Test error on MHD64 with the Child conditioned on the Parent's coarse prediction or on the downsampled input $Du_t$.}
\label{fig:parentneed}
\end{minipage}\hfill
\begin{minipage}[t]{0.605\linewidth}
\centering
\captionof{table}{\textbf{Ablation of ScaleSplit-NO configurations.}
Test error (NMSE) on MHD64 in the low-data setting.
The final configuration serves as the reference.
Parentheses give the change in error relative to the final configuration, \mbox{\protect\textcolor{worse}{($\uparrow$ worse)}}.}
\label{tab:ablation}
\raisebox{-\height}{\small\setlength{\tabcolsep}{4pt}\begin{tabular}{lr@{\,}l}
\toprule
Configuration & \multicolumn{2}{c}{NMSE ($\times10^{-3}$)}\\
\midrule
ScaleSplit-NO final configuration & 6.20 & \\
\midrule
w/o pretraining and zero init. & 6.50 & \textcolor{worse}{($\uparrow$5.0\%)}\\
w/o Child fine-tuning & 6.75 & \textcolor{worse}{($\uparrow$8.9\%)}\\
Fine-tuning with coarse target & 10.9 & \textcolor{worse}{($\uparrow$75.2\%)}\\
Residual correction & 6.68 & \textcolor{worse}{($\uparrow$7.9\%)}\\
Joint Parent--Child training & 7.64 & \textcolor{worse}{($\uparrow$23.2\%)}\\
\midrule
Global coarse grid $24^3\rightarrow16^3$ & 6.20 & \textcolor{worse}{($\uparrow$0.1\%)}\\
Global coarse grid $24^3\rightarrow32^3$ & 6.42 & \textcolor{worse}{($\uparrow$3.5\%)}\\
Local Child grid $16^3\rightarrow12^3$ & 6.30 & \textcolor{worse}{($\uparrow$1.6\%)}\\
Local Child grid $16^3\rightarrow24^3$ & 7.06 & \textcolor{worse}{($\uparrow$13.9\%)}\\
\bottomrule
\end{tabular}}
\end{minipage}
\end{figure}

\textbf{Effect of the Parent's coarse prediction.}
In turbulence, the large scales evolve more slowly than the small ones \citep{tennekes1972first}, but they still change within one time step, and the Parent predicts this change for the Child.
To test the contribution of this prediction, we replace it by the downsampled input $Du_t$ and train the Child in the same way (Figure~\ref{fig:parentneed}, Appendix~\ref{app:training}).
With a single trajectory, the two conditions give nearly the same error.
A likely reason is that 99 pairs of slowly changing coarse fields contain too few distinct changes for the Parent to learn from.
With four and all eight trajectories, the error without the Parent's coarse prediction is 14\% and 15\% higher.
Once the data contain enough changes for the Parent to learn from, its coarse prediction clearly helps.

\textbf{Training procedure.}
Fine-tuning the Child on the coarse target in place of the Parent's predictions raises the error by 75.2\% (Table~\ref{tab:ablation}), presumably because the Child then sees the errors of the coarse prediction only at inference.
Training the Parent and the Child jointly through the Child loss, starting from the same pretrained models, raises the error by 23.2\%, which shows the value of training them separately.
The remaining three variants have a smaller effect: without Child fine-tuning, with a residual correction, or without pretraining and zero initialization, the error rises by 5--9\%.

\textbf{Sizes of the Parent and the Child.}
On MHD64, among the sizes we tested, the final configuration of ScaleSplit-NO, with a global coarse grid of $24^3$ and Child patches of $16^3$, performs relatively well (Table~\ref{tab:ablation}, bottom).
Patches as small as $16^3$ may suffice because the small scales of turbulence are shaped mainly by the nearby flow, so that a small patch together with the Parent's coarse prediction contains most of what the Child needs.
Smaller patches also give more training samples from the same data.
The Parent only has to supply the large-scale motion, for which a coarse grid is enough: a grid of $16^3$ gives the same error as $24^3$, and a finer grid of $32^3$ is slightly worse.
Small grids thus save memory without costing accuracy.
The results on JHTDB256 are given in Appendix~\ref{app:ablations-results}.

\begin{figure}[t]
\begin{minipage}[t]{0.36\linewidth}
\centering
\raisebox{-\dimexpr\height+8pt\relax}{\includegraphics[width=1.68in,trim=0 4.5 0 10,clip]{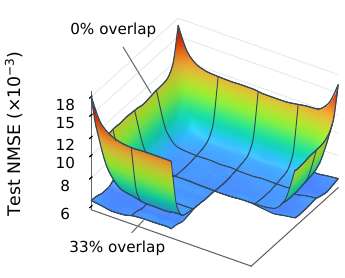}}
\caption{\textbf{Effect of the overlap.}
Spatial distribution of the test error (NMSE) on the central slice of a $16^3$ Child patch on MHD64 in the low-data setting, without overlap and with 33\% overlap.}
\label{fig:edge}
\end{minipage}\hfill
\begin{minipage}[t]{0.605\linewidth}
\centering
\captionof{table}{\textbf{Trade-off between accuracy and inference time for different overlaps.}
Test error (NMSE) in the low-data setting, voxels predicted by the Child relative to the field, and inference time per field.
\textbf{Bold}: default overlap of ScaleSplit-NO in this work.}
\label{tab:overlap}
\raisebox{-\height}{\small\setlength{\tabcolsep}{2.8pt}\begin{tabular}{rccc@{\hskip 5pt}rccc}
\toprule
\multicolumn{4}{c}{MHD64} & \multicolumn{4}{c}{JHTDB256}\\
\cmidrule(r){1-4}\cmidrule(l){5-8}
Overlap & NMSE & Voxels & Time & Overlap & NMSE & Voxels & Time\\
 & $\times10^{-3}$ & & ms & & $\times10^{-3}$ & & s\\
\midrule
0\% & 8.87 & 1.0$\times$ & 26 & 0\% & 2.17 & 1.0$\times$ & 1.35\\
20\% & 6.44 & 2.0$\times$ & 47 & 16\% & 1.33 & 1.7$\times$ & 2.24\\
\textbf{33\%} & 6.20 & 3.4$\times$ & 81 & \textbf{27\%} & 1.25 & 2.6$\times$ & 3.46\\
43\% & 6.10 & 5.4$\times$ & 122 & 38\% & 1.23 & 4.3$\times$ & 5.68\\
50\% & 6.05 & 8.0$\times$ & 182 & 50\% & 1.21 & 8.0$\times$ & 10.6\\
\bottomrule
\end{tabular}}
\end{minipage}
\end{figure}

\textbf{Patch boundaries and overlap.}
The Fourier layers of the Child treat each patch as periodic \citep{FNO}, but a patch cut out of the field is not periodic.
The error of the Child is therefore largest near the faces of the patch, about twice that at the center on the faces and about four times in the corners (Figure~\ref{fig:edge}).
Overlapping the patches reduces this error, because the Hann weights of the assembly are very small near the faces and the larger errors there hardly affect the assembled field.
On MHD64, an overlap of 20\% already lowers the NMSE by about a quarter (Table~\ref{tab:overlap}).
Larger overlaps lower the NMSE by only a few percent more, while the inference time grows quickly and at 50\% is nearly four times that at 20\%.
The overlap therefore trades accuracy against inference time, and we use 33\% by default.
On JHTDB256, the overlap of the Parent patches changes the error by less than 3\% (Appendix~\ref{app:overlap-results}).

\FloatBarrier

%% file: sections/conclusion.tex
\section{Conclusion}
\label{sec:conclusion}
We have presented ScaleSplit-NO, which exploits the multiscale structure of turbulence by coupling global coarse forecasts with local full-resolution predictions. It learns surrogates of high-resolution three-dimensional turbulence with limited GPU memory and training data. Additional coarse fields can further improve prediction accuracy by upgrading the Parent without retraining the Child.

\textbf{Limitations.}
Overlapping patches cover more voxels than the field itself in three dimensions, so inference becomes slower as the overlap grows.
The overlap has to balance accuracy against inference speed.
Moreover, the method is validated only on structured grids, and on unstructured meshes the FNO cannot serve directly as its backbone.

\clearpage

\section*{Acknowledgments}

Shaoxiang Qin, Yucheng Zhao and Xiongye Xiao acknowledge support from the U.S. National Science Foundation (NSF) through the Science and Technology Center for Complex Particle Systems (COMPASS) [Award No. 2243104].
Zongyi Li acknowledges support from Schmidt Sciences AI2050 Fellowship. Liangzhu Leon Wang acknowledges financial support from the Natural Sciences and Engineering Research Council of Canada (NSERC) through the Discovery Grants Program [RGPIN-2024-06297] and the Canada First Research Excellence Fund (CFREF) [IMPACT Project – Transforming Built and Urban Microclimates: Advancing Resilience Science for Vulnerable Populations in a Decarbonized and Electrified Canada].

\label{LastMainPage}
\clearpage

%% file: sections/appendix/visualizations.tex
\section{Related work on neural operators}
\label{app:related}
Operator learning approximates mappings between function spaces, including PDE solution operators \citep{lu2021learning,kovachki2023neural}.
Spectral approaches use Fourier \citep{FNO} or multiwavelet representations \citep{gupta2021multiwavelet}, with extensions through factorized spectral layers \citep{DBLP:conf/iclr/TranMXO23} and direct frequency-domain learning \citep{poli2022transform}.
Related ideas have also been adapted to visual token mixing \citep{DBLP:conf/iclr/GuibasMLTAC22}.
To capture spatial structure, neural operators incorporate U-shaped components \citep{wen2022u,DBLP:journals/tmlr/RahmanRA23,li2023long}, local integral and differential operators \citep{local-kernel}, or combinations of local and global information \citep{MG-TFNO,LOGLO-FNO,EddyFormer,P3d}.
Attention offers another approach, connecting operator approximation with Galerkin projection \citep{cao2021choose} and enabling models to handle heterogeneous inputs \citep{hao2023gnot} or aggregate physics-aware tokens \citep{DBLP:conf/icml/WuLW0L24}.
Other attention-based designs use factorization \citep{li2023scalable} or evolve representations in latent space \citep{alkin2024universal}.
Equivariant architectures explicitly encode selected spatial symmetries \citep{DBLP:conf/icml/HelwigZ0KWJ23,ReViT}.
Alongside architectural design, MPP, Poseidon, and PDE-Transformer study pretraining across PDE systems and adaptation to downstream tasks \citep{mccabe2024multiple,herde2024poseidon,DBLP:conf/icml/Holzschuh0KT25}.
For time-dependent problems, iterative denoising has been used to improve rollout accuracy \citep{lippe2023pde}, and generative operators learn conditional future dynamics \citep{koupai2025enma}.
Generative modeling also allows turbulent states to be sampled from geometry and boundary conditions without an initial flow field \citep{lienen2024zero}.

%% file: sections/appendix/additional_results.tex
\section{Additional results}
\label{app:results}\label{app:diagnostics}

\subsection{Main comparison in NRMSE}
Table~\ref{tab:main-nrmse} repeats the one-step comparison of Table~\ref{tab:main} in NRMSE.
The ranking of the methods is the same as in Table~\ref{tab:main} in all four settings.

\begin{table}[!h]
\caption{One-step test NRMSE ($\times10^{-2}$) with 1/8 and with all of the training data.}
\label{tab:main-nrmse}
\begin{center}\small
\begin{tabular}{lcc@{\hskip 14pt}cc}
\toprule
& \multicolumn{2}{c}{MHD64} & \multicolumn{2}{c}{JHTDB256}\\
\cmidrule(lr){2-3}\cmidrule(l){4-5}
Method & Low data (1/8) & Full data & Low data (1/8) & Full data\\
\midrule
U-Net & 11.3 & 7.01 & 9.94 & 8.14\\
FNO & 29.0 & 15.9 & 30.3 & 19.8\\
MG-TFNO & 9.37 & 7.06 & 16.4 & 15.3\\
EddyFormer & 8.58 & 6.94 & 19.5 & 18.8\\
P3D & 11.2 & 7.10 & 5.07 & 4.62\\
ReViT & 28.4 & 22.5 & 17.4 & 17.3\\
ScaleSplit-NO & \textbf{7.13} & \textbf{6.07} & \textbf{3.49} & \textbf{3.14}\\
\bottomrule
\end{tabular}
\end{center}
\end{table}

\subsection{Visualizations}
\label{app:visualizations}
Figures~\ref{fig:app-mhd-methods}--\ref{fig:app-jht-iso} show the first test pair of each dataset for the models trained with 1/8 of the data.
The normalized error is the signed error divided by the root mean square of the target channel over the complete field, and the NMSE given with each panel refers to this test pair.
Figure~\ref{fig:app-urban} shows the first test pair of the urban wind field (Appendix~\ref{app:cityffd}).

\begin{figure}[p]
\centering
\includegraphics[width=\textwidth]{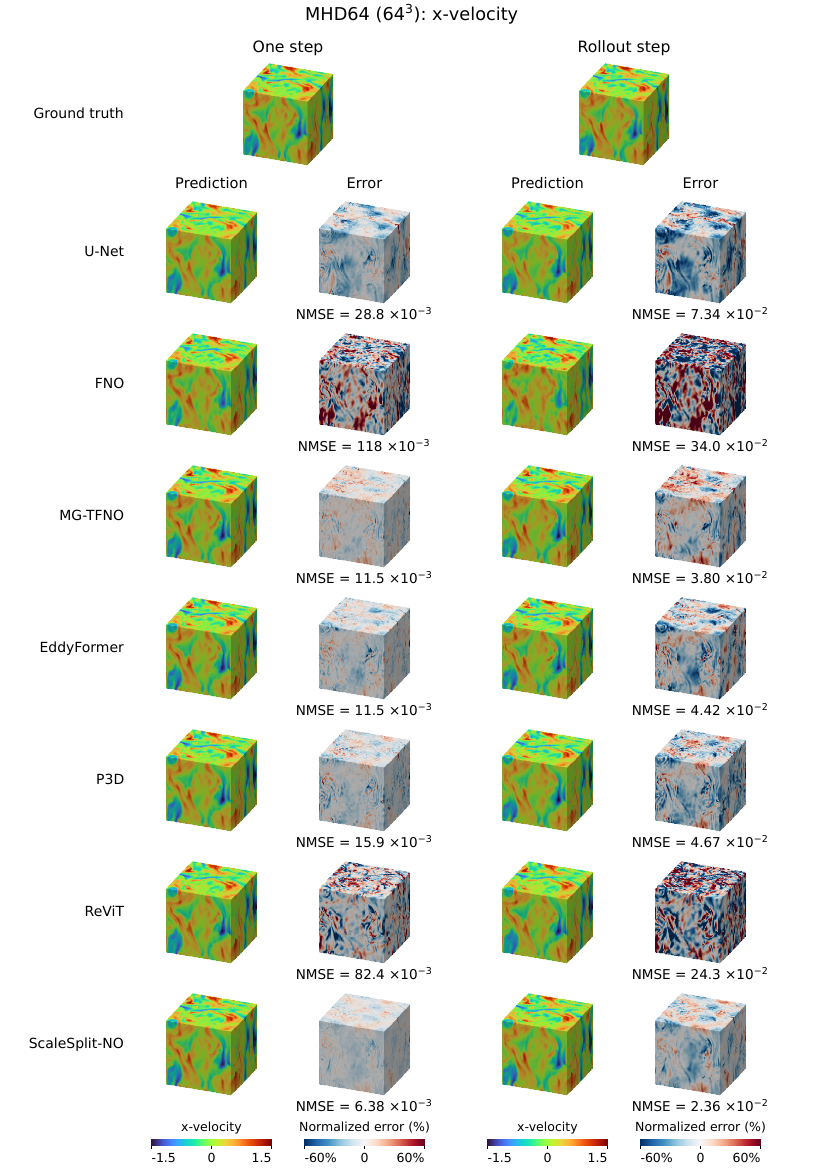}
\caption{\textbf{MHD64, x-velocity of all methods.}
One-step prediction (left) and rollout step 2 (right) with the normalized errors.}
\label{fig:app-mhd-methods}
\end{figure}

\begin{figure}[p]
\centering
\includegraphics[width=\textwidth]{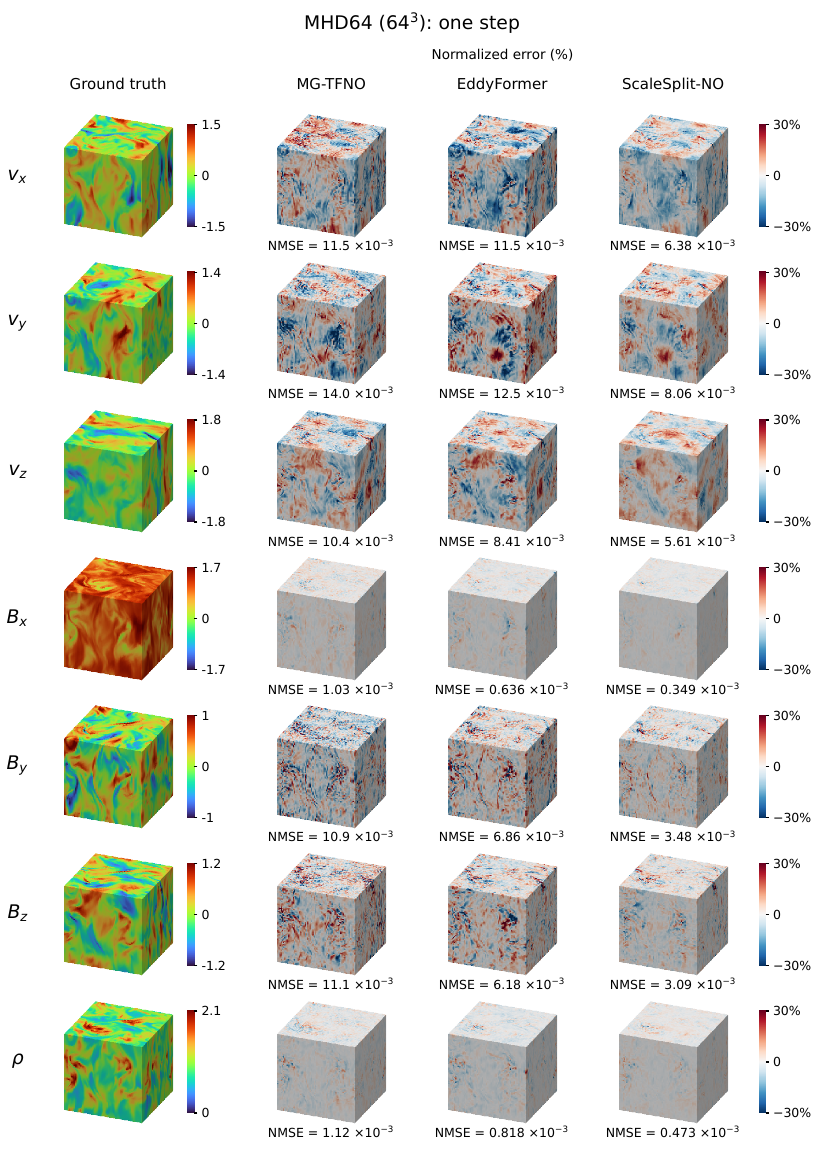}
\caption{\textbf{MHD64, all seven channels at one step.}
Ground truth and normalized errors of the two strongest baselines, MG-TFNO and EddyFormer, and of ScaleSplit-NO.}
\label{fig:app-mhd-channels}
\end{figure}

\begin{figure}[p]
\centering
\includegraphics[width=\textwidth]{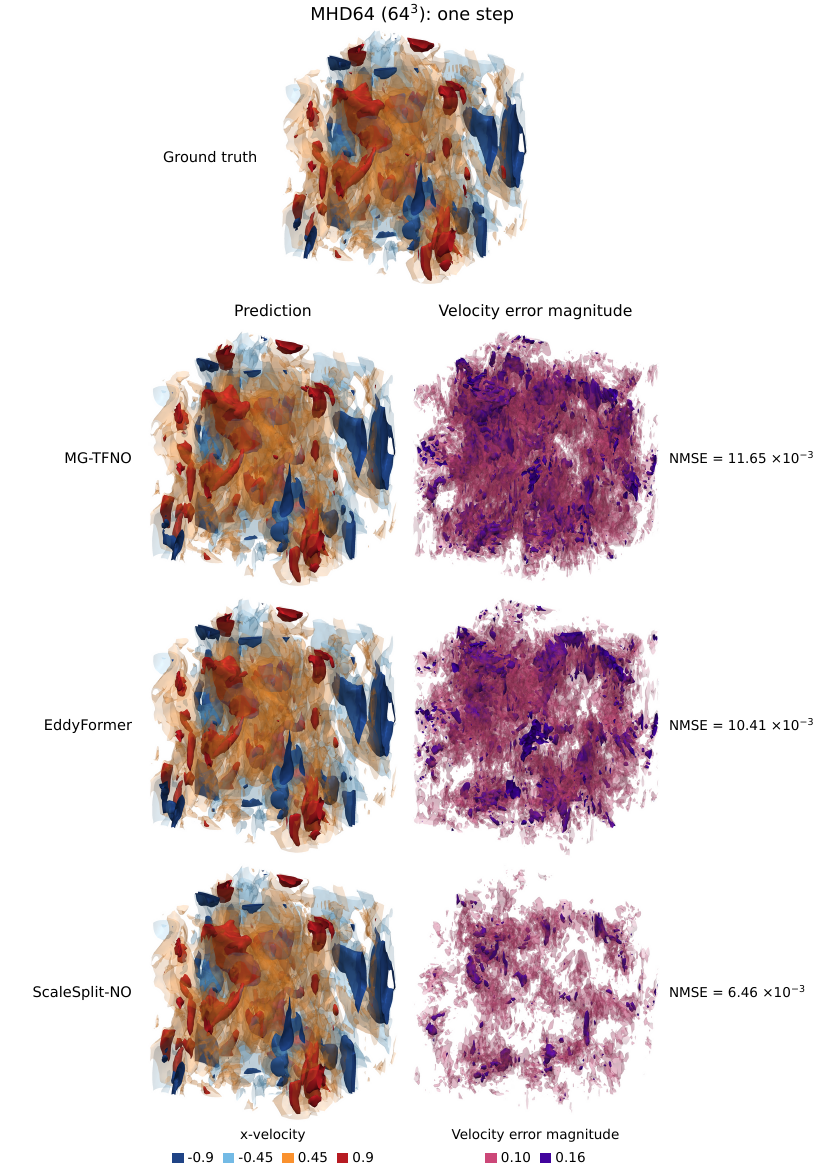}
\caption{\textbf{MHD64, complete field at one step.}
Isosurfaces of the x-velocity at one and two standard deviations of the ground truth, and of the velocity error magnitude at 0.10 and 0.16, for MG-TFNO, EddyFormer, and ScaleSplit-NO.}
\label{fig:app-mhd-iso}
\end{figure}

\begin{figure}[p]
\centering
\includegraphics[width=\textwidth]{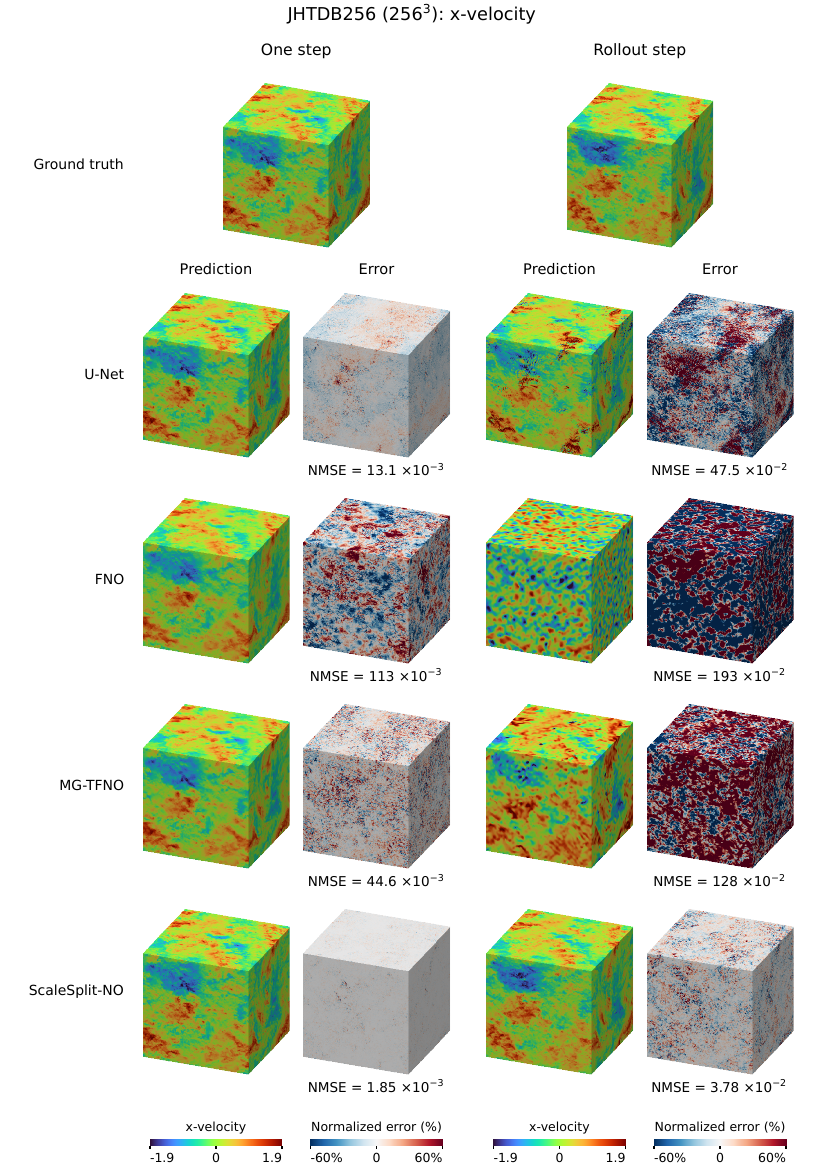}
\caption{\textbf{JHTDB256, x-velocity of U-Net, FNO, MG-TFNO, and ScaleSplit-NO.}
One-step prediction (left) and rollout step 15 (right) with the normalized errors.}
\label{fig:app-jht-methods1}
\end{figure}

\begin{figure}[p]
\centering
\includegraphics[width=\textwidth]{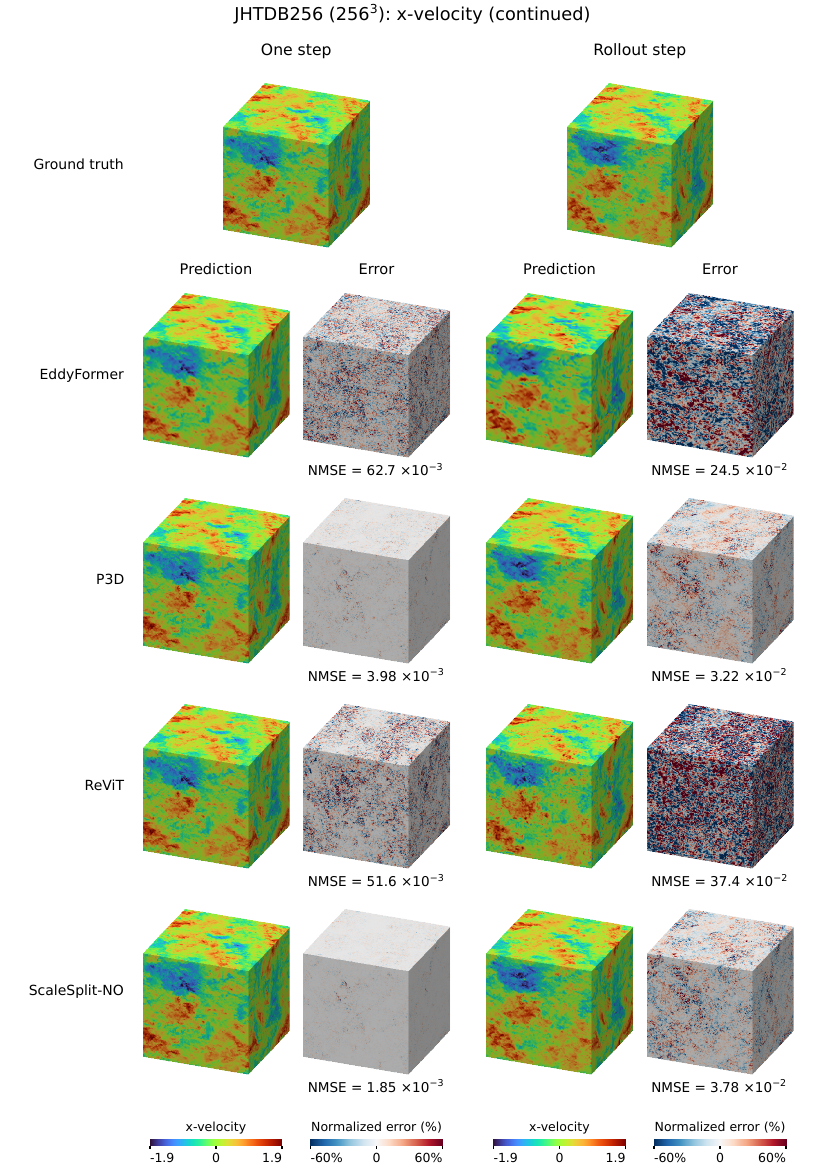}
\caption{\textbf{JHTDB256, x-velocity of EddyFormer, P3D, ReViT, and ScaleSplit-NO.}
One-step prediction (left) and rollout step 15 (right) with the normalized errors, on the same scales as Figure~\ref{fig:app-jht-methods1}.}
\label{fig:app-jht-methods2}
\end{figure}

\begin{figure}[p]
\centering
\includegraphics[width=\textwidth]{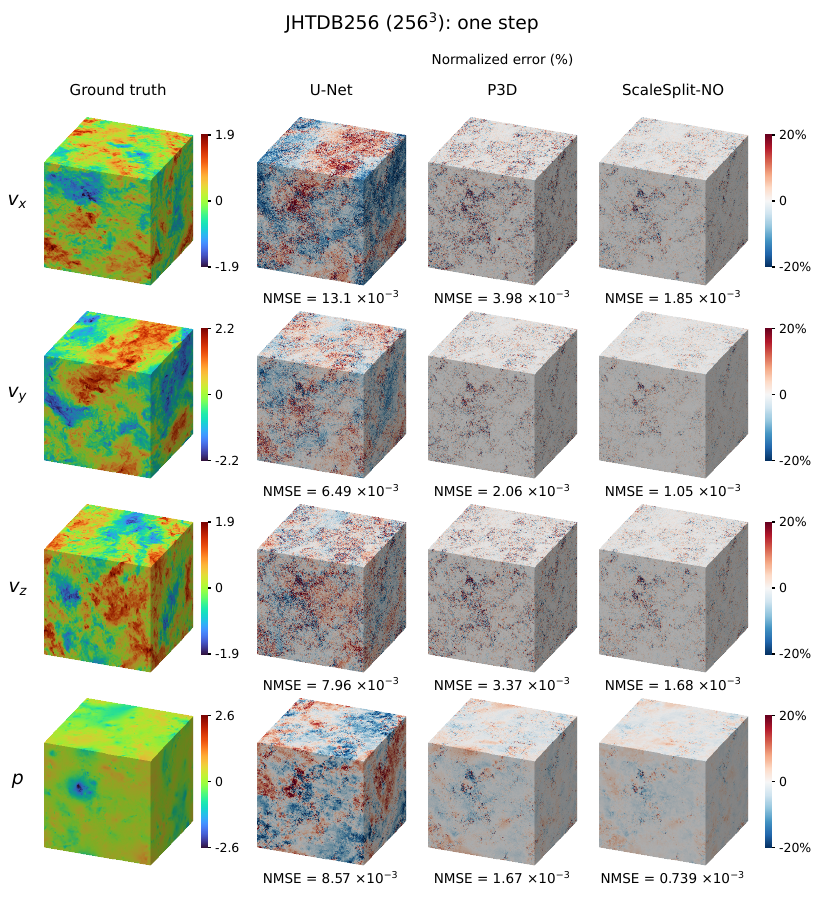}
\caption{\textbf{JHTDB256, all four channels at one step.}
Ground truth and normalized errors of the two strongest baselines, U-Net and P3D, and of ScaleSplit-NO.}
\label{fig:app-jht-channels}
\end{figure}

\begin{figure}[p]
\centering
\includegraphics[width=\textwidth]{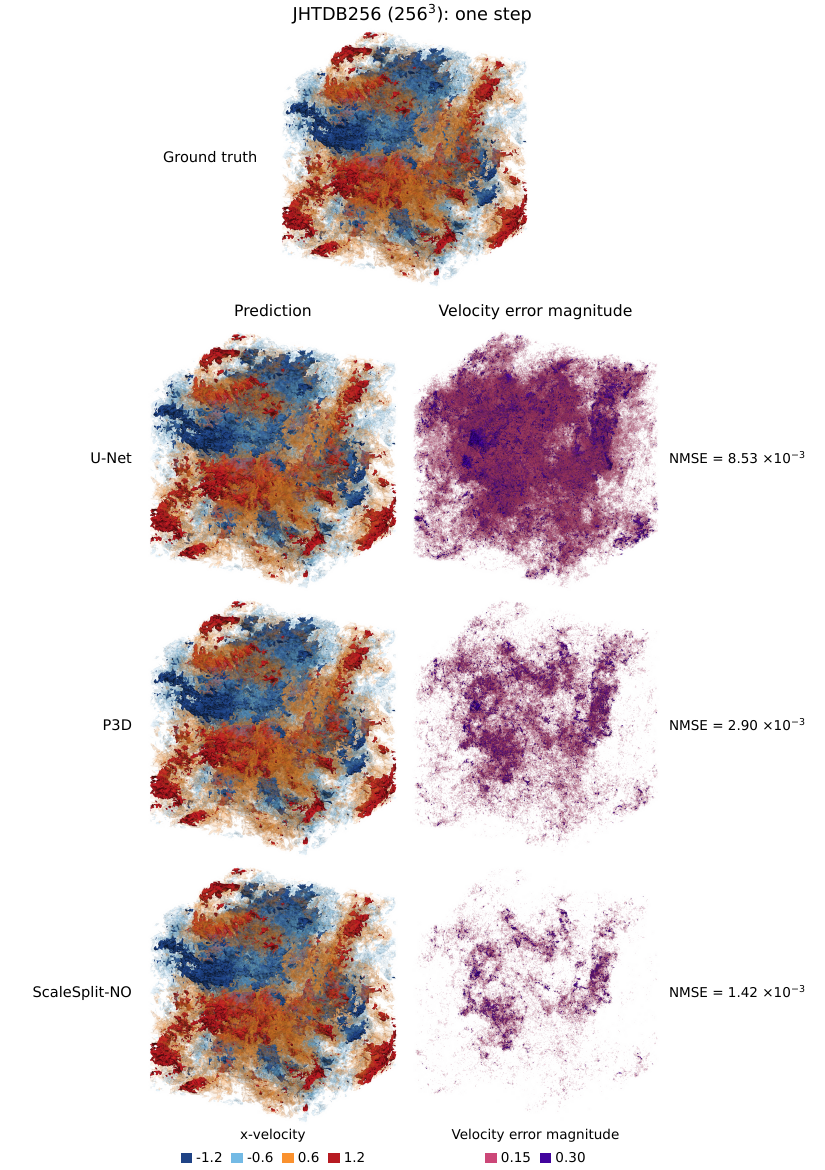}
\caption{\textbf{JHTDB256, complete field at one step.}
Isosurfaces of the x-velocity at one and two standard deviations of the ground truth, and of the velocity error magnitude at 0.15 and 0.30, for U-Net, P3D, and ScaleSplit-NO.}
\label{fig:app-jht-iso}
\end{figure}

\begin{figure}[p]
\centering
\includegraphics[width=\textwidth]{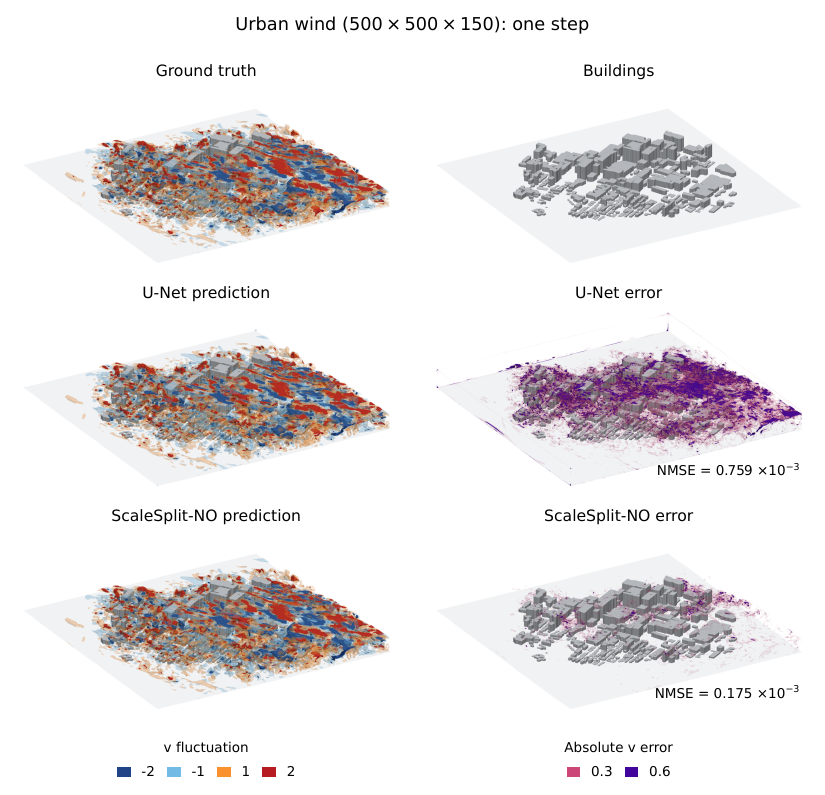}
\caption{\textbf{Urban wind at one step.}
Isosurfaces of the fluctuation of the velocity component $v$ about its temporal mean at $\pm1$ and $\pm2$, and of the absolute error of $v$ at 0.3 and 0.6, for U-Net and ScaleSplit-NO.
The NMSE refers to $v$ in the fluid region of this test pair.}
\label{fig:app-urban}
\end{figure}
\clearpage

\subsection{Energy spectra}
\label{sec:diagnostics}

Figures~\ref{fig:app-spectra-mhd} and~\ref{fig:app-spectra-jht} show the velocity energy spectra of the one-step predictions for the test pair of Figures~\ref{fig:app-mhd-methods}--\ref{fig:app-jht-iso}, together with the energy spectra of their errors.
On both datasets, the error spectrum of ScaleSplit-NO lies at or near the bottom at all wavenumbers, so that no baseline is clearly more accurate at any scale.
On MHD64, the error of ScaleSplit-NO is lower than that of every baseline at all wavenumbers.
On JHTDB256, it is lower than that of every baseline except P3D at all wavenumbers, and comparable to that of P3D at low to intermediate wavenumbers, so its overall advantage over P3D comes mainly from the smaller error at high wavenumbers.
On both datasets, its lead is larger at the lowest wavenumbers and at the high wavenumbers.

\begin{figure}[t]
\centering
\includegraphics[width=\textwidth]{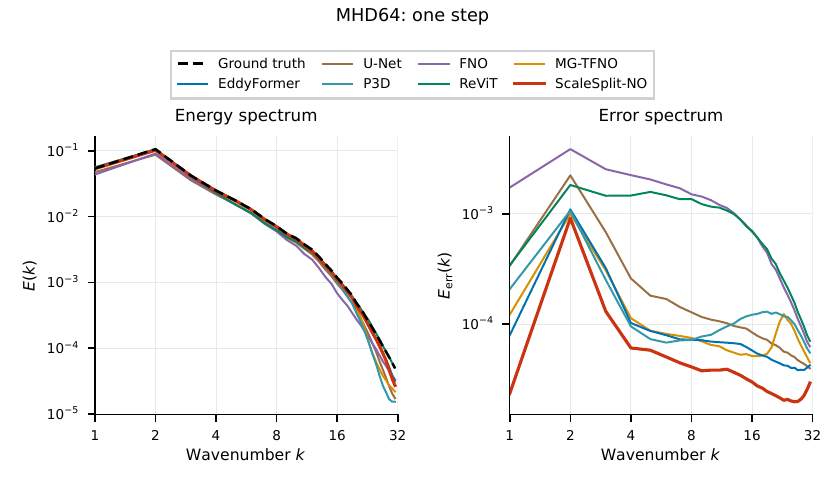}
\caption{\textbf{MHD64, velocity spectra at one step.}
Energy spectra of the ground truth and of all predictions (left) and of the prediction errors (right).}
\label{fig:app-spectra-mhd}
\end{figure}

\begin{figure}[t]
\centering
\includegraphics[width=\textwidth]{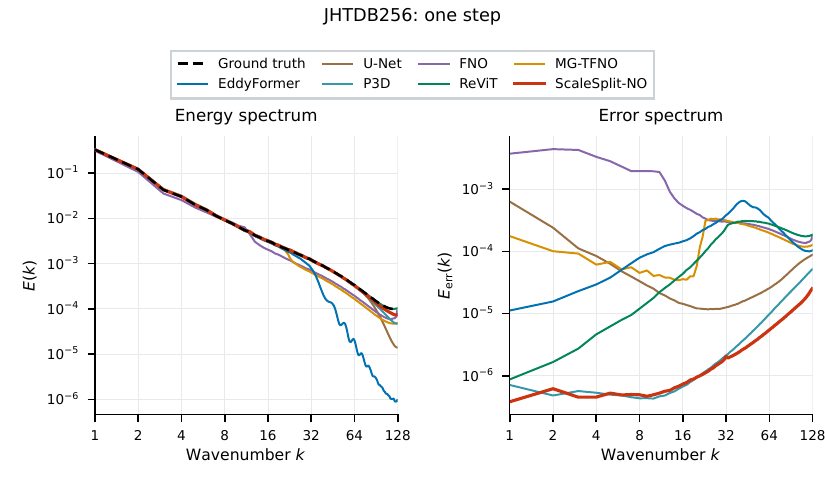}
\caption{\textbf{JHTDB256, velocity spectra at one step.}
Energy spectra of the ground truth and of all predictions (left) and of the prediction errors (right).}
\label{fig:app-spectra-jht}
\end{figure}

\subsection{Sensitivity to repeated training}
\label{app:sensitivity}
Due to limited computational resources, we could not repeat every experiment with full training.
To show how sensitive the results are to the random seed, each method in the low-data setting of both datasets and each ablation variant of Table~\ref{tab:ablation} is trained three times with different seeds at 20\% of its training cost, with its own learning-rate schedule compressed to this budget.
For ScaleSplit-NO the Parent is kept, and the Child is pretrained and fine-tuned again.
Tables~\ref{tab:sensitivity} and~\ref{tab:sensitivity-ablation} give the mean and the standard deviation of the one-step test NMSE.
Since these runs are short, the models are further from convergence than in the full training, and to different degrees.
Their errors are therefore generally higher.
ScaleSplit-NO still has the lowest mean error on both datasets.
Ablation variants with close errors in Table~\ref{tab:ablation} can change their order.
The overall trends agree with the full training, and for 20 of the 23 configurations the standard deviation is below 5\% of the mean, and at most 13\% in the remaining ones.

\begin{table}[t]
\caption{One-step test NMSE ($\times10^{-3}$) of all methods with 1/8 of the data over three repeated runs at 20\% of the training cost (seeds 101, 202, 303), mean $\pm$ sample standard deviation.}
\label{tab:sensitivity}
\begin{center}\small
\begin{tabular}{lcc}
\toprule
Method & MHD64 & JHTDB256\\
\midrule
U-Net & $21.45\pm0.34$ & $17.19\pm0.86$\\
FNO & $99.81\pm2.23$ & $99.50\pm10.61$\\
MG-TFNO & $9.67\pm0.34$ & $32.23\pm3.91$\\
EddyFormer & $12.45\pm0.23$ & $41.10\pm0.02$\\
P3D & $21.15\pm0.94$ & $4.07\pm0.04$\\
ReViT & $93.89\pm0.13$ & $32.96\pm0.09$\\
ScaleSplit-NO & $6.88\pm0.11$ & $1.53\pm0.01$\\
\bottomrule
\end{tabular}
\end{center}
\end{table}

\begin{table}[t]
\caption{One-step test NMSE ($\times10^{-3}$) of the ablation variants of Table~\ref{tab:ablation} on MHD64 with 1/8 of the data over three repeated runs at 20\% of the training cost, mean $\pm$ sample standard deviation.}
\label{tab:sensitivity-ablation}
\begin{center}\small
\begin{tabular}{lc}
\toprule
 & NMSE\\
\midrule
ScaleSplit-NO final configuration & $6.88\pm0.11$\\
\midrule
w/o pretraining and zero init. & $7.22\pm0.23$\\
w/o Child fine-tuning & $7.23\pm0.04$\\
Fine-tuning with coarse target & $12.04\pm0.48$\\
Residual correction & $7.33\pm0.14$\\
Joint Parent--Child training & $11.81\pm0.02$\\
\midrule
Global coarse grid $24^3\rightarrow16^3$ & $7.23\pm0.61$\\
Global coarse grid $24^3\rightarrow32^3$ & $7.09\pm0.15$\\
Local Child grid $16^3\rightarrow12^3$ & $6.76\pm0.11$\\
Local Child grid $16^3\rightarrow24^3$ & $9.08\pm0.05$\\
\bottomrule
\end{tabular}
\end{center}
\end{table}

\subsection{Swapping in more accurate Parents}
\label{app:transfer}
Table~\ref{tab:swap} gives the one-step test NMSE of every Child combined with every Parent trained on at least as much data, without retraining the Child.
The additional coarse fields are downsampled from the high-resolution simulation, so this experiment measures the efficiency of storage, not of generating the data.
Figure~\ref{fig:dataeff}(b1, b2) plots these values.
The last column replaces the coarse prediction by the coarse target $Du_{t+1}$, with the Child fixed.
The coarse target is not available at inference and serves as a reference.
On JHTDB256 the gain from a Parent trained on additional coarse fields is small, 1.9\% for 1.4\% of the full-data storage with the Child of 50 frames.
The added coarse data are few, and isotropic turbulence carries little global information, which P3D also exploits by omitting its context network for this flow.

\begin{table}[t]
\caption{One-step test NMSE ($\times10^{-3}$) of a fixed Child with Parents trained on more data, and with the coarse target as condition.
Rows: training data of the Child.
Columns: training data of the Parent.
Both are given as fractions of the full training set.}
\label{tab:swap}
\begin{center}\small
\begin{tabular}{llccccc}
\toprule
 & & \multicolumn{4}{c}{Parent} & \\
\cmidrule(lr){3-6}
Dataset & Child & 1/8 & 1/4 & 1/2 & 1 & Coarse target\\
\midrule
\multirow{4}{*}{MHD64} & 1/8 & 6.20 & 5.68 & 5.10 & 4.99 & 2.63\\
 & 1/4 & & 5.53 & 4.94 & 4.82 & 2.44\\
 & 1/2 & & & 4.79 & 4.77 & 2.48\\
 & 1 & & & & 4.52 & 2.35\\
\midrule
\multirow{4}{*}{JHTDB256} & 1/8 & 1.254 & 1.248 & 1.238 & 1.231 & 1.182\\
 & 1/4 & & 1.172 & 1.165 & 1.158 & 1.111\\
 & 1/2 & & & 1.078 & 1.071 & 1.025\\
 & 1 & & & & 1.025 & 0.980\\
\bottomrule
\end{tabular}
\end{center}
\end{table}

\subsection{Sizes of the Parent and the Child}
\label{app:ablations-results}
On MHD64 the Parent is global: it predicts the whole coarse field, so the Parent grid has the same size as the global coarse grid, and the two change together.
On JHTDB256 the Parent works on patches: it processes patches of the global coarse grid, so the Parent grid and the global coarse grid can be varied separately.
Table~\ref{tab:scale-jhtdb} gives the results on both datasets with 1/8 of the data, and every variant is trained with the complete training procedure.
On JHTDB256 the final configuration is among the most accurate ones, and only a smaller global coarse grid and a larger local Child grid are clearly worse.
A global coarse grid of $32^3$ is 10.5\% worse, and a finer one of $128^3$, which requires eight times more coarse data, brings no improvement.
Increasing the resolution of the global coarse grid has both an advantage and a disadvantage.
It adds detail to the Parent's prediction, which helps the Child only if this detail is accurate, but it also increases the number of values that a Parent model trained on very few coarse fields predicts, which makes overfitting more likely.
On MHD64 a finer grid of $32^3$ is likewise slightly worse than the final $24^3$.
On JHTDB256, smaller and larger Parent grids and a local Child grid of $12^3$ give similar errors, while a local Child grid of $24^3$ is 29.8\% worse.

\begin{table}[t]
\definecolor{worse}{HTML}{8C1616}
\caption{Sizes of the Parent and the Child with 1/8 of the data.
One-step test NMSE ($\times10^{-3}$), with the change relative to the final configuration of each dataset in parentheses, \mbox{\protect\textcolor{worse}{($\uparrow$ worse)}}.}
\label{tab:scale-jhtdb}
\begin{center}\small
\begin{tabular}{llr@{\,}l}
\toprule
Dataset & Setting & \multicolumn{2}{c}{NMSE}\\
\midrule
\multirow{5}{*}{MHD64} & \textbf{Final:} global coarse grid $=$ Parent grid $24^3$, local Child grid $16^3$ & 6.20 & \\
 & Global coarse grid $=$ Parent grid $24^3\rightarrow16^3$ & 6.20 & \textcolor{worse}{($\uparrow$0.1\%)}\\
 & Global coarse grid $=$ Parent grid $24^3\rightarrow32^3$ & 6.42 & \textcolor{worse}{($\uparrow$3.5\%)}\\
 & Local Child grid $16^3\rightarrow12^3$ & 6.30 & \textcolor{worse}{($\uparrow$1.6\%)}\\
 & Local Child grid $16^3\rightarrow24^3$ & 7.06 & \textcolor{worse}{($\uparrow$13.9\%)}\\
\midrule
\multirow{7}{*}{JHTDB256} & \textbf{Final:} global coarse grid $64^3$, Parent grid $32^3$, local Child grid $16^3$ & 1.25 & \\
 & Global coarse grid $64^3\rightarrow32^3$ & 1.39 & \textcolor{worse}{($\uparrow$10.5\%)}\\
 & Global coarse grid $64^3\rightarrow128^3$ & 1.26 & \textcolor{worse}{($\uparrow$0.35\%)}\\
 & Parent grid $32^3\rightarrow16^3$ & 1.30 & \textcolor{worse}{($\uparrow$3.96\%)}\\
 & Parent grid $32^3\rightarrow64^3$ & 1.32 & \textcolor{worse}{($\uparrow$4.96\%)}\\
 & Local Child grid $16^3\rightarrow12^3$ & 1.32 & \textcolor{worse}{($\uparrow$5.25\%)}\\
 & Local Child grid $16^3\rightarrow24^3$ & 1.63 & \textcolor{worse}{($\uparrow$29.8\%)}\\
\bottomrule
\end{tabular}
\end{center}
\end{table}

\subsection{Overlap of the Parent patches}
\label{app:overlap-results}
Table~\ref{tab:overlap-jhtdb} varies the overlap of the Parent patches on JHTDB256, with the Child overlap fixed at 27\%.
The overlap of the Parent patches changes the error by less than 3\% and the inference time by less than 2\%.

\begin{table}[t]
\caption{Overlap of the Parent patches on JHTDB256 with 1/8 of the data.
\textbf{Bold}: the default configuration.}
\label{tab:overlap-jhtdb}
\begin{center}\small
\begin{tabular}{rccc}
\toprule
Overlap & NMSE ($\times10^{-3}$) & Voxels & Time (s)\\
\midrule
0\% & 1.29 & 1.0$\times$ & 3.44\\
\textbf{33\%} & 1.25 & 3.4$\times$ & 3.46\\
50\% & 1.25 & 8.0$\times$ & 3.50\\
\bottomrule
\end{tabular}
\end{center}
\end{table}

\subsection{Urban wind fields}
\label{app:cityffd}
The CityFFD Montreal dataset \citep{mortezazadeh2022cityffd,qin2025modeling} contains the wind field of an urban district on a $500\times150\times500$ grid with a spacing of 4\,m, 1\,m, and 4\,m, with the three velocity components and the temperature as channels.
The buildings are given by the signed distance to their surfaces, which is appended to the four channels as a static input.
The prediction interval is 4\,s.
Frames 4300 to 4498 give 98 training pairs, frames 4500 to 4548 give 23 validation pairs, and frames 4550 to 4598 give 23 test pairs, and the NMSE is computed over the fluid region.

ScaleSplit-NO keeps the Parent and Child backbones of JHTDB256 and is adapted to the geometry as follows.
Patches are tiled without periodic wrap-around, with the last patch along each axis ending at the domain boundary, the coordinate channels give the absolute position in the domain, and the signed distance to the buildings is an additional input.
The Parent operates on $32^3$ patches of a $125\times50\times125$ coarse grid, and the Child on $16^3$ patches.
Both methods are trained on the same data and within 40 GB of GPU memory.
U-Net uses the architecture of Appendix~\ref{app:baselines} with the largest base width that fits the full field into this memory, and is trained for 300 epochs.
The test NMSE ($\times10^{-3}$) is 1.87 for ScaleSplit-NO and 5.46 for U-Net.
The peak GPU memory of one training update is 2.8 GiB for ScaleSplit-NO and 27.1 GiB for U-Net.
Figure~\ref{fig:app-urban} shows the first test pair.

%% file: sections/appendix/datasets.tex
\section{Datasets and evaluation protocol}
\label{app:protocol}
\subsection{Datasets and splits}
MHD64 is the $64^3$ magnetohydrodynamic subset of The Well (\texttt{MHD\_64}), which simulates compressible turbulence in the magnetized interstellar medium, at sonic and Alfv\'enic Mach numbers of 0.7 \citep{the_well,burkhart2020catalogue}.
It contains ten trajectories of 100 frames at a time interval of 0.01, with density, three velocity components, and three magnetic-field components, which are coupled, so a surrogate learns several interacting fields at once.
The trajectories start from different initial conditions.
We keep the released split of eight training trajectories, one for validation, and one for testing, and the four data budgets use one, two, four, and all eight training trajectories.

JHTDB256 is derived from the forced isotropic turbulence of the Johns Hopkins Turbulence Database (\texttt{isotropic1024coarse}), a direct numerical simulation at a Taylor-scale Reynolds number of 433 that provides velocity and pressure on a periodic $1024^3$ grid \citep{perlman2007data,li2008public}.
At its original resolution of $1024^3$, the simulation resolves all scales of the flow, with a clear separation between the energy-containing and the dissipative scales.
We take every fourth grid point along each axis, giving $256^3$ fields that resolve wavenumbers up to 128, and every tenth archived snapshot, giving a time interval of 0.02 between consecutive frames.
At this resolution, all baselines use relatively small configurations to be trained on a single 40 GB GPU (Appendix~\ref{app:baselines}).
The four training budgets use the archived steps 3501 to 3991, 3001 to 3991, 2001 to 3991, and 1 to 3991 in strides of ten, that is, 50, 100, 200, and 400 frames, so that the smaller budgets keep the frames closest to the test window.
Validation uses the steps 4001 to 4491 and testing the steps 4501 to 4991.

MHD64 hence tests generalization to an unseen trajectory, and JHTDB256 tests prediction at later times of one flow.
In the low-data setting with 1/8 of the data, MHD64 loses seven of its eight initial conditions and not only frames, so it is the more demanding of the two low-data tests.

\subsection{Evaluation protocol}
Single-step evaluation uses all 99 test pairs of MHD64 and all 49 test pairs of JHTDB256.
For autoregressive rollouts, the MHD64 test trajectory is split into ten segments of ten frames and the JHTDB256 test window into three segments of sixteen frames.
Each segment starts from the simulated field of its first frame and is predicted for 9 and 15 steps, and every method receives its own previous prediction as input.
Table~\ref{tab:main} reports the mean over the first two steps on MHD64 and over all 15 steps on JHTDB256.
These horizons correspond to a similar loss of correlation, since the flow of MHD64 changes much faster per time step than that of JHTDB256.
With every channel shifted and scaled by the training statistics and all channels flattened, the cosine similarity between a simulated field and the simulated field two steps earlier is 0.81 on MHD64, and the same value is reached after 15 steps on JHTDB256.
Longer rollouts on MHD64 are of limited use, since on the complete magnetohydrodynamic dataset of The Well the errors of the benchmark models exceed that of simply using the mean field as the prediction within a dozen steps \citep{the_well}.

\subsection{Metrics}
\label{app:metrics}
The NMSE defined in Section~\ref{sec:protocol} is the main metric.
Appendix~\ref{app:results} also reports the normalized root mean squared error, $\mathrm{NRMSE}=\frac1q\sum_k\|F(u_t)_k-u_{t+1,k}\|_2/\|u_{t+1,k}\|_2$.
Both are computed on the complete field in physical units and averaged over the test pairs, and for rollouts over the segments at each step.
All percentages in the paper, including the appendix, are computed from the unrounded results and not from the rounded values shown in the tables.

%% file: sections/appendix/implementation.tex
\section{Implementation and training details}
\label{app:implementation}
\subsection{Architecture}
\paragraph{Grids and patches.}
MHD64 uses a $64^3$ fine grid, a $24^3$ coarse grid, and $16^3$ Child patches, and its Parent processes the whole coarse field.
JHTDB256 uses a $256^3$ fine grid, a $64^3$ coarse grid, $32^3$ Parent patches, and $16^3$ Child patches.
The restriction $D$ keeps the frequencies representable on the coarse grid, and the interpolation $I$ pads the remaining frequencies with zeros.

\paragraph{FNO backbones.}
Both models are FNOs with residual Fourier blocks \citep{FNO}, each with instance normalization and a pointwise MLP, and three coordinate channels are appended to the input.
Table~\ref{tab:architecture-parameters} lists the configurations.
The Parent has half the depth and half the width of the Child.
On JHTDB256 the two models still have a similar number of parameters, since the Parent keeps twice as many Fourier modes per axis on its $32^3$ patch as the Child on its $16^3$ patch, and the number of spectral weights grows with the cube of this number.
The Child architecture is the same on both datasets up to the number of physical channels.

\begin{table}[t]
\caption{Backbone configurations.
Modes are the number of Fourier modes per axis.
The parameters of the Child include the condition pathway.}
\label{tab:architecture-parameters}
\begin{center}\small
\begin{tabular}{lcccc}
\toprule
 & \multicolumn{2}{c}{Parent} & \multicolumn{2}{c}{Child}\\
\cmidrule(lr){2-3}\cmidrule(l){4-5}
 & MHD64 & JHTDB256 & MHD64 & JHTDB256\\
\midrule
Input & $24^3$ field & $32^3$ patch & $16^3$ patch & $16^3$ patch\\
Fourier blocks & 4 & 4 & 8 & 8\\
Hidden width & 32 & 32 & 64 & 64\\
Modes per axis & 12 & 16 & 8 & 8\\
Parameters ($10^6$) & 56.6 & 134.2 & 134.3 & 134.3\\
\bottomrule
\end{tabular}
\end{center}
\end{table}

\paragraph{Parent prediction.}
\label{app:physical-maps}
The Parent predicts the increment from the coarse input to the coarse target, and the increment is added to the coarse input to give the coarse prediction.
On JHTDB256 the increments of the overlapping Parent patches are assembled on the coarse grid before they are added.

\paragraph{Condition pathway.}
\label{app:training-maps}
The normalized input patch and the normalized condition are concatenated before the lifting layer.
In addition, nine pointwise linear maps take the condition to the hidden width, and their outputs are added after the lift and after each of the eight Fourier blocks.
At the start of fine-tuning, the condition columns of the lift and all nine maps, including their biases, are zero, and the remaining weights are those of the pretrained Child.

\begin{lemma}[Function-preserving initialization]
\label{lem:zero}
With this initialization, $C_{\theta,0}(x,c)=C^{\mathrm{pre}}_{\theta}(x)$ for every input patch $x$ and every condition $c$.
\end{lemma}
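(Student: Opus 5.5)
The plan is an induction along the forward pass of the Child, tracking the hidden state $h_\ell$ computed by $C_{\theta,0}(x,c)$ and comparing it with the hidden state $h_\ell^{\mathrm{pre}}$ of $C^{\mathrm{pre}}_\theta(x)$. First I make precise what ``this initialization'' sets. The lifting layer acts on the concatenation $[\bar x;\bar c]$ with matrix $[A_x\;A_c]$ and bias $b$. Here $A_x$ and $b$ are the pretrained lift weights, and $A_c=0$. Each pointwise map $\beta_\ell(\bar c)=B_\ell\bar c+d_\ell$, for $\ell=0,\dots,8$, has $B_\ell=0$ and $d_\ell=0$. All other weights (Fourier blocks, instance normalizations, pointwise MLPs, projection) equal those of the pretrained Child. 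The normalization $\bar x$ of the input patch uses the same training-set statistics in both networks. The three coordinate channels are appended identically, so I treat them as part of $\bar x$ and as columns of $A_x$.

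The base case is the lift. We have $h_0=A_x\bar x+A_c\bar c+b=A_x\bar x+b$, which is exactly the pretrained lift $h_0^{\mathrm{pre}}$. The injection after the lift adds $\beta_0(\bar c)=0\cdot\bar c+0=0$, so $h_0=h_0^{\mathrm{pre}}$ pointwise at every voxel. This holds for every $\bar c$, including non-finite-free edge cases, as long as $\bar c$ is finite. Since $\bar c$ is a normalized interpolated field, it is finite.

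The inductive step is the same for each block. If $h_{\ell-1}=h_{\ell-1}^{\mathrm{pre}}$, then applying the $\ell$-th residual Fourier block gives the same output, because the block is a deterministic function of its input with identical parameters. This includes instance normalization, which depends only on the current hidden tensor, and the spectral convolution, whose weights are unchanged. Adding $\beta_\ell(\bar c)=0$ leaves the result unchanged, so $h_\ell=h_\ell^{\mathrm{pre}}$ for $\ell=1,\dots,8$. The projection head and the output denormalization are the shared pretrained maps and never see $\bar c$. Hence the outputs coincide, giving $C_{\theta,0}(x,c)=C^{\mathrm{pre}}_\theta(x)$.

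The only step needing care is not technical but definitional. I must check that $\bar c$ enters the network nowhere except through $A_c$ and the nine maps $\beta_\ell$. In particular, instance normalization must act after concatenation only on hidden features and not on the raw input, so that statistics of $\bar c$ cannot leak into $\bar x$'s path. Dropout or other stochastic layers, if present, must be evaluated with the same randomness. I would state these as the architectural facts from the paragraph above the lemma and then run the induction.
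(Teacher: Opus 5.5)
Your proof is correct and follows the paper's argument: the condition enters only through the zeroed condition columns of the lift and the nine zeroed pointwise maps. The hidden state after the lift therefore equals the pretrained one, and since every Fourier block and the projection are unchanged, so are all later hidden states and the output. Your extra checks on instance normalization, the coordinate channels, and finiteness of $\bar c$ only make explicit the architectural facts that the paper's two-sentence proof takes for granted.
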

\paragraph{Proof.}
The condition enters the Child only through the condition columns of the lift and through the added outputs of the nine maps, which are all zero.
The hidden state after the lift therefore equals that of the pretrained Child, and since every block and the projection are unchanged, so do all later hidden states and the output.

\paragraph{Patch layout and assembly.}
On MHD64 and JHTDB256, the patches are placed at inference at nearly uniform spacing with periodic wrap-around.
This gives 6 Child patches per axis on MHD64 with an average overlap of 33\%, 22 on JHTDB256 with an average overlap of 27\%, and 3 Parent patches per axis on the coarse grid of JHTDB256 with an average overlap of 33\%.
Overlapping predictions are weighted by a Hann window without its zero end points and divided by the sum of the weights, as in Equation~\eqref{eq:assembly}.

\paragraph{Patch sampling during training.}
In every epoch, training patches are drawn at uniformly random positions, and the input patch, the target patch, and the condition share the same position.
On MHD64, 64 Child patches are drawn from every frame pair per epoch.
On JHTDB256, 256 Child patches and 8 Parent patches are drawn from every frame pair per epoch.

\paragraph{Normalization.}
Every channel is shifted and scaled with the mean and standard deviation of the training frames of the respective data budget.
A Parent keeps its own statistics.

\subsection{Training procedure}\label{app:training}
The three stages are run one after another on one GPU with AdamW, weight decay $10^{-4}$, and a cosine learning-rate schedule that decays to zero.
Table~\ref{tab:training-settings} lists the epochs, learning rates, and batch sizes.
The Parent is trained with the 48 rotations and reflections of the cube as augmentation, and the Child is trained without augmentation in both pretraining and fine-tuning.
Each Child is fine-tuned with the Parent trained on the same data.
Checkpoints are selected by the mean squared error in physical units on the validation split, and the selected checkpoints are evaluated once on the test split.

\paragraph{Training objectives.}
All three stages minimize the normalized root mean squared error (NRMSE), $\ell(\widehat v,v)=\frac1q\sum_{k=1}^{q}\frac{\|\widehat v_k-v_k\|_2}{\|v_k\|_2}$, where $\widehat v$ is a prediction, $v$ its target, $k$ indexes the $q$ channels, and the norms run over the grid points of the target.
The loss is computed in normalized units, on the outputs of the networks before the normalization is undone.
For patch $j$, the normalized input patch, condition, and target patch are
\[
\bar x_j=\mathsf N_xT_ju_t,\qquad
\bar c_j=\mathsf N_cc_j,\qquad
\bar y_j=\mathsf N_yT_ju_{t+1},
\]
with fixed normalization maps $\mathsf N_x$, $\mathsf N_c$, and $\mathsf N_y$.
The Child map of Equation~\eqref{eq:composite-map} in physical units is
\[
C_{\theta,\psi}(x,c)=\mathsf N_y^{-1}\big[\widetilde C_{\theta,\psi}(\mathsf N_xx,\mathsf N_cc)\big],
\]
where $\widetilde C_{\theta,\psi}$ is the Child network on normalized fields, and $\widetilde C^{\mathrm{pre}}_\theta$ denotes the same network without the condition pathway, with $C^{\mathrm{pre}}_\theta(x)=\mathsf N_y^{-1}\big[\widetilde C^{\mathrm{pre}}_\theta(\mathsf N_xx)\big]$.
The three objectives are
\begin{equation}
\begin{aligned}
 \text{Parent:}\quad&\min_\phi\ \mathbb E\,\ell\big(G_\phi(\mathsf N_zDu_t),\ \mathsf N_\Delta(Du_{t+1}-Du_t)\big),\\
 \text{Child pretraining:}\quad&\min_\theta\ \mathbb E\,\ell\big(\widetilde C^{\mathrm{pre}}_\theta(\bar x_j),\ \bar y_j\big),\\
 \text{Child fine-tuning:}\quad&\min_{\theta,\psi}\ \mathbb E\,\ell\big(\widetilde C_{\theta,\psi}(\bar x_j,\bar c_j),\ \bar y_j\big)\quad\text{with }\phi\text{ fixed},
\end{aligned}
\label{eq:train-cond-main}
\end{equation}
where $G_\phi$ is the Parent network of Equation~\eqref{eq:parent-decoding-main} and the expectation runs over the training frame pairs and patch positions.
On JHTDB256, $Du_t$ and $Du_{t+1}$ in the Parent objective are patches of the coarse grid.

\begin{table}[t]
\caption{Training settings of the three stages.}
\label{tab:training-settings}
\begin{center}\small
\begin{tabular}{llccc}
\toprule
Dataset & Stage & Epochs & Learning rate & Batch\\
\midrule
\multirow{3}{*}{MHD64} & Parent training & 300 & $2\times10^{-3}$ & 1 field\\
 & Child pretraining & 50 & $1\times10^{-3}$ & 8 patches\\
 & Child fine-tuning & 300 & $2\times10^{-3}$ & 8 patches\\
\midrule
\multirow{3}{*}{JHTDB256} & Parent training & 300 & $2\times10^{-3}$ & 8 patches\\
 & Child pretraining & 100 & $5\times10^{-4}$ & 16 patches\\
 & Child fine-tuning & 500 & $2\times10^{-3}$ & 16 patches\\
\bottomrule
\end{tabular}
\end{center}
\end{table}

\paragraph{Ablation variants.}\label{app:ablation}
The variants of Table~\ref{tab:ablation} are trained on 1/8 of the MHD64 data with the settings above.
Without Child fine-tuning, a Child is trained for 350 epochs without condition.
Without pretraining and zero initialization, a Child with randomly initialized backbone and condition pathway is trained for 350 epochs with the condition.
Fine-tuning with coarse target uses the pretrained Child and the coarse target as condition, and is evaluated with the coarse prediction.
The residual variant is trained for 350 epochs to predict the difference between the target patch and the interpolated coarse prediction.
Joint training starts from the pretrained Parent and Child, updates both through the Child loss for 300 epochs, and uses the updated Parent at inference.
The size variants retrain the affected model with the settings above and fine-tune the Child again.

\subsection{Baseline configurations}
\label{app:baselines}
Table~\ref{tab:baseline-settings} lists the configurations of the six baselines.
All baselines are trained on the same splits as ScaleSplit-NO without augmentation, and select their checkpoints by the mean squared error in physical units on the validation split.
Each baseline is trained with its own loss, U-Net, FNO, and EddyFormer with the mean squared error on normalized fields, ReViT and P3D with the mean squared error on physical fields, and MG-TFNO with the $H^1$ loss on normalized fields.
On JHTDB256, every method is trained within the same 40 GB of GPU memory, and all baselines use relatively small configurations (Table~\ref{tab:baseline-settings}).
U-Net, FNO, and MG-TFNO use a smaller width, FNO and MG-TFNO fewer Fourier modes, and ReViT a patch size of 4 in place of 2, with activation checkpointing for MG-TFNO and ReViT.
P3D uses its smaller P3D-B backbone on $128^3$ crops without a context network, and EddyFormer keeps the number of tokens of its $96^3$ setting.
MG-TFNO is proposed for two-dimensional fields, and we extend its multi-grid decomposition to three dimensions.
MG-TFNO processes the patches of a field in parallel but computes one loss on the prediction assembled from all of them, so every update still requires the whole field.
Its patches thus allow the memory to be distributed over several GPUs, which lowers the memory per GPU but not the total memory.
The complete P3D method fine-tunes a global context network on the whole domain, since crops alone miss global information \citep{P3d}.
For isotropic turbulence, which has weak global features, the P3D authors can omit the global context network, but the crops still need to be as large as $128^3$, since smaller crops give higher errors.
We follow this configuration on JHTDB256, while on MHD64 P3D is trained on the whole field.

\begin{table}[t]
\caption{Baseline configurations on MHD64 / JHTDB256.
Width is the base channel count of U-Net, the hidden width of FNO, MG-TFNO, and EddyFormer, and the embedding dimension of ReViT.
EddyFormer is trained for 10k steps with 1/8 of the data and 80k steps with all of the data.}
\label{tab:baseline-settings}
\begin{center}\small\setlength{\tabcolsep}{4pt}
\begin{tabular}{lllllll}
\toprule
Method & Width & Depth & Modes & Epochs & Learning rate & Batch\\
\midrule
U-Net & 64 / 16 & 4 levels & -- & 300 & $2\times10^{-4}$ & 2 / 1\\
FNO & 48 / 12 & 4 layers & 16 / 12 & 400 & $1\times10^{-3}$ & 1\\
MG-TFNO & 64 / 40 & 4 layers & 32 / 24 & 500 / 200 & $1\times10^{-3}$ & 1 field\\
EddyFormer & 32 & 4 layers & 13 & 10k--80k steps & $1\times10^{-3}$ & 1\\
P3D & P3D-L / P3D-B & & -- & 1,000 / 4,000 & $2\times10^{-4}$ & 8 / 4\\
ReViT & 48 & 1-2-4-2-1 & -- & 300 & $1\times10^{-3}$ & 2 / 1\\
\bottomrule
\end{tabular}
\end{center}
\end{table}

\subsection{Training time, inference time, and memory}
\label{app:resources}
All times were measured on the same A100 with 40 GB, with no other job running on it.
The training memory is measured on an RTX PRO 6000 with 96 GB, so that the memory of the baselines can also be measured at batch sizes that exceed 40 GB (Figure~\ref{fig:batch-memory}).
The same measurements on the A100 are nearly identical, with differences of at most a few MB.
Table~\ref{tab:train-config} lists the training memory and time, and Table~\ref{tab:inference} the inference time.
Child fine-tuning takes about 80\% of the training time of ScaleSplit-NO.
Without overlap, ScaleSplit-NO is about as fast as the baselines on MHD64.
On JHTDB256 it is slower, because all baselines use relatively small configurations there (Appendix~\ref{app:baselines}).
With overlap, the Child predicts more voxels, so inference takes longer, roughly in proportion to the number of voxels (Table~\ref{tab:inference}).
The training memory is the peak GPU memory allocated during one training update, including the model, gradients, and optimizer states, and excluding validation and the caching of data.

Figure~\ref{fig:batch-memory} shows how the training memory changes with the batch size.
The memory is measured in the same way as in Table~\ref{tab:train-config}.
For every baseline, the memory grows roughly in proportion to the batch size, since each sample is a whole field or a large crop.
Each sample of ScaleSplit-NO is a patch or a coarse field, and on JHTDB256 all three of its stages need at most 4.6 GiB up to a batch of 16.
A single sample of any baseline on JHTDB256 already needs at least 7.0 GiB.
For MG-TFNO the batch counts the patches processed together, with one whole field per update, and all points use activation checkpointing.
On JHTDB256, EddyFormer at a batch of 4 and MG-TFNO at a batch of 8 do not fit into 96 GB and are omitted.

\begin{table}[t]
\caption{\textbf{Training samples.}
Input is the grid size of one training sample, batch the number of samples per update, memory the peak GPU memory allocated during one training update, and time the training time with all of the data.
For ScaleSplit-NO the first row gives the maximum memory and the total time of its three stages, which are trained one after another and listed below it.
MG-TFNO splits each field into eight patches and computes one loss on the prediction assembled from all of them, so every update uses one whole field.
For MG-TFNO, Input is the size of one patch without padding, $48^3$ and $144^3$ with periodic padding, and Batch the number of patches processed together.}
\label{tab:train-config}
\begin{center}\small
\begin{tabular}{lcccccccc}
\toprule
& \multicolumn{4}{c}{MHD64} & \multicolumn{4}{c}{JHTDB256}\\
\cmidrule(lr){2-5}\cmidrule(l){6-9}
 & Input & Batch & Memory & Time & Input & Batch & Memory & Time\\
Method & & & GiB & h & & & GiB & h\\
\midrule
U-Net & $64^3$ & 2 & 4.6 & 4.6 & $256^3$ & 1 & 31.7 & 41.3\\
FNO & $64^3$ & 1 & 5.5 & 4.6 & $256^3$ & 1 & 35.0 & 41.4\\
MG-TFNO & $32^3$ & 8 & 10.6 & 20.1 & $128^3$ & 1 & 19.8 & 32.7\\
EddyFormer & $64^3$ & 1 & 17.0 & 16.7 & $256^3$ & 1 & 26.4 & 22.9\\
P3D & $64^3$ & 8 & 16.3 & 14.1 & $128^3$ & 4 & 25.6 & 28.3\\
ReViT & $64^3$ & 2 & 2.3 & 2.6 & $256^3$ & 1 & 14.3 & 70.6\\
\midrule
ScaleSplit-NO & -- & -- & 2.2 & 9.7 & -- & -- & 3.0 & 25.5\\
\quad Stage 1: Parent training & $24^3$ & 1 & 0.9 & 0.7 & $32^3$ & 8 & 3.0 & 1.0\\
\quad Stage 2: Child pretraining & $16^3$ & 8 & 2.2 & 1.2 & $16^3$ & 16 & 2.8 & 3.7\\
\quad Stage 3: Child fine-tuning & $16^3$ & 8 & 2.2 & 7.7 & $16^3$ & 16 & 2.8 & 20.8\\
\bottomrule
\end{tabular}
\end{center}
\end{table}

\begin{table}[t]
\caption{\textbf{Inference time.}
Time for one complete field.
For ScaleSplit-NO the time is given for several overlaps of the Child patches.}
\label{tab:inference}
\begin{center}\small
\begin{tabular}{lcccc}
\toprule
& \multicolumn{2}{c}{MHD64} & \multicolumn{2}{c}{JHTDB256}\\
\cmidrule(lr){2-3}\cmidrule(l){4-5}
Method & Overlap & Time (ms) & Overlap & Time (s)\\
\midrule
U-Net & -- & 28 & -- & 0.31\\
FNO & -- & 15 & -- & 0.41\\
MG-TFNO & -- & 55 & -- & 0.71\\
EddyFormer & -- & 341 & -- & 0.38\\
P3D & -- & 35 & -- & 0.41\\
ReViT & -- & 23 & -- & 0.44\\
\midrule
\multirow{5}{*}{ScaleSplit-NO} & 0\% & 26 & 0\% & 1.35\\
 & 20\% & 47 & 16\% & 2.24\\
 & 33\% & 81 & 27\% & 3.46\\
 & 43\% & 122 & 38\% & 5.68\\
 & 50\% & 182 & 50\% & 10.56\\
\bottomrule
\end{tabular}
\end{center}
\end{table}

\begin{figure}[t]
\centering
\includegraphics[width=\linewidth]{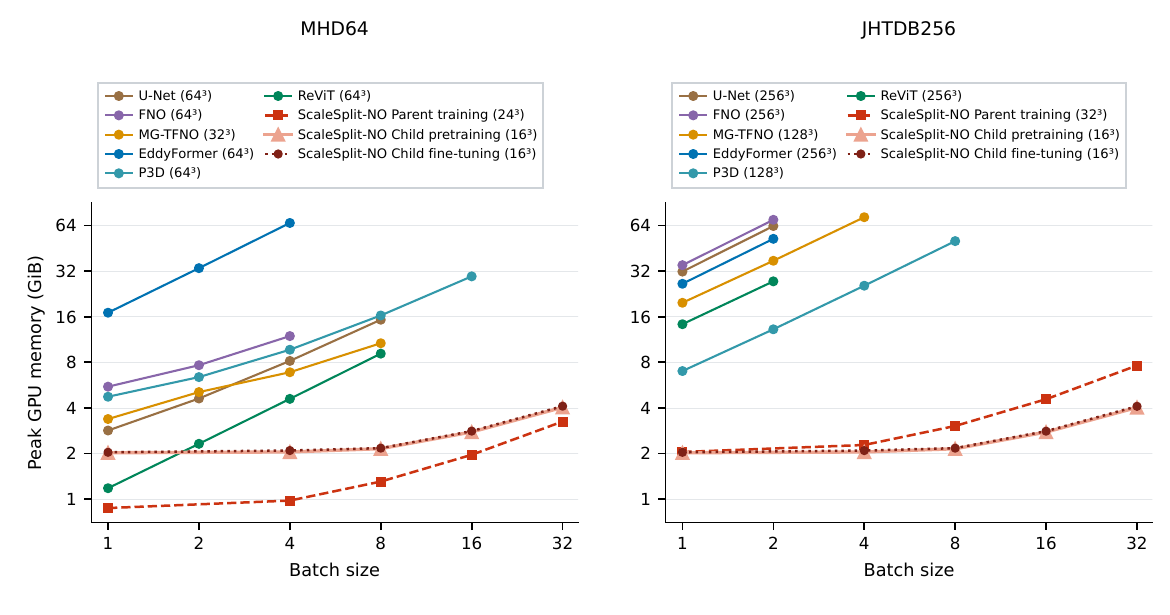}
\caption{\textbf{Training memory versus batch size.}
Peak GPU memory of one training update, with the input size of one sample in the legend.
ScaleSplit-NO is shown for each of its three training stages.}
\label{fig:batch-memory}
\end{figure}

%% file: sections/appendix/theory.tex
\section{Supplementary theory}
\label{app:theory}

This appendix analyzes the prediction interface of ScaleSplit-NO for fixed predictors and a specified test distribution.

\subsection{Conventions}
\label{app:theory-conventions}

Let $\mathcal H_N=\mathbb R^{q\times n}$ and
$\mathcal H_R=\mathbb R^{q\times r}$, where $n=N^3$ and $r=R^3$.
We use the physical-field norms
\begin{equation}
 \|v\|_N^2=\frac{1}{qn}\sum_x\|v(x)\|_2^2,
 \qquad
 \|z\|_R^2=\frac{1}{qr}\sum_\xi\|z(\xi)\|_2^2,
 \qquad
 \langle v,w\rangle_N=\frac{1}{qn}\sum_x v(x)^\top w(x).
 \label{eq:theory-norms}
\end{equation}
Write $U=u_t$ and $V=u_{t+1}$ for a test pair with law $\mu$ and finite second moments.
A deterministic evolution law $V=\mathcal S_{\Delta t}(U)$ is
allowed but is not required for the risk identities.
All trained parameters,
normalization maps, patch locations, and assembly weights are fixed, as is the
test distribution $\mu$.
The physical-field MSE risk
$\mathcal R_\mu(F)=\mathbb E\|F(U)-V\|_N^2$ differs from both the NRMSE training objective and the target-normalized NMSE reported in Section~\ref{sec:protocol}.
The bounds below are stated for this MSE risk.

All prediction and preprocessing maps are measurable.
The restriction $D:\mathcal H_N\to\mathcal H_R$ and interpolation
$I:\mathcal H_R\to\mathcal H_N$ are the fixed linear maps of Section~\ref{sec:formulation}.
The restriction $T_j$ extracts
the patch supported on $\Omega_j$.
The physical Parent map $P_\phi$ includes
any coarse-patch assembly and conversion from normalized network outputs to
physical states.
Write $\vartheta=(\theta,\psi)$, so that $C_\vartheta=C_{\theta,\psi}$ is the physical Child map of Equation~\eqref{eq:composite-map}, including the fixed normalization and decoding specified in Appendix~\ref{app:training-maps}.
The assembly $A$ is the map of Equation~\eqref{eq:assembly}.
For each patch, let $X_j=T_jU$, $Y_j=T_jV$, and $c_j(u)=T_jIP_\phi(Du)$; its random forecast condition is $C_j=c_j(U)$.
Thus
\begin{equation}
 F_{\phi,\vartheta}(U)
 =A\big((C_\vartheta(T_jU,T_jIP_\phi(DU)))_j\big).
 \label{eq:theory-composed-map}
\end{equation}
As in the main text, $F=F_{\phi,\vartheta}$ when both predictors are fixed.
With the Child fixed, we write $F_P$ to emphasize the physical Parent map $P$ used in this composition; $P=P_\phi$ for the trained Parent.

\subsection{Conditional information and coarse prediction}
\label{app:theory-information}

Fix a patch index $j$, and write
$X=T_jU$, $Y=T_jV$, $Z=DU$, and $C=T_jIP(Z)$.
Only within a fixed-patch argument, we suppress the index on $X_j,Y_j,C_j$; the risk retains its subscript $j$ to identify the target.
Here $C$ denotes a random condition, whereas $C_\vartheta$ denotes the Child map.
For an observation $W$, define the unrestricted patch risk:
\begin{equation}
 \mathcal R_j^*(W)=\inf_{f\ \mathrm{measurable}}\mathbb E\|Y-f(W)\|_2^2.
 \label{eq:theory-bayes-risk}
\end{equation}
The same identities hold after any fixed positive normalization of this norm.

\begin{lemma}[Value and limitation of forecast conditions]
\label{prop:information}
Under the fixed-patch setup above and $\mathbb E\|Y\|_2^2<\infty$,
\begin{align*}
 \mathcal R_j^*(X)-\mathcal R_j^*(X,C)
 &=\mathbb E\left\|\mathbb E[Y\mid X,C]-\mathbb E[Y\mid X]\right\|_2^2,\\
 \mathcal R_j^*(X,Z)&\le \mathcal R_j^*(X,C)\le \mathcal R_j^*(X).
\end{align*}
\end{lemma}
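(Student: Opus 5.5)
The plan is to use the standard $L^2$ projection characterization of conditional expectation. Because $\mathbb E\|Y\|_2^2<\infty$, for any observation $W$ the infimum in Equation~\eqref{eq:theory-bayes-risk} is attained by $f^*(W)=\mathbb E[Y\mid W]$, and it equals $\mathbb E\|Y-\mathbb E[Y\mid W]\|_2^2$. To show this, I take any measurable $f$ with $\mathbb E\|f(W)\|_2^2<\infty$; maps with infinite second moment give infinite risk and can be discarded. I then expand
\[
\|Y-f(W)\|_2^2=\|Y-\mathbb E[Y\mid W]\|_2^2+\|\mathbb E[Y\mid W]-f(W)\|_2^2+2\,(Y-\mathbb E[Y\mid W])^\top(\mathbb E[Y\mid W]-f(W)).
\]
The cross term has zero expectation, which follows from the tower property conditioning on $W$, since $\mathbb E[Y\mid W]-f(W)$ is $\sigma(W)$-measurable and square integrable. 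A positive normalization of the norm only rescales everything, so the same identities hold after it.

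For the first identity, I apply this with $W=X$ and $W=(X,C)$. Write $m_1=\mathbb E[Y\mid X]$ and $m_2=\mathbb E[Y\mid X,C]$. Since $\sigma(X)\subseteq\sigma(X,C)$, the orthogonality argument above gives $\mathbb E\|Y-m_1\|^2=\mathbb E\|Y-m_2\|^2+\mathbb E\|m_2-m_1\|^2$. This uses the same cross-term cancellation with $f=m_1$, which is $\sigma(X,C)$-measurable. Subtracting the two risks then yields exactly $\mathbb E\|m_2-m_1\|_2^2$.

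For the chain of inequalities, the general principle is monotonicity: if $\sigma(W_1)\subseteq\sigma(W_2)$, then $\mathcal R_j^*(W_2)\le\mathcal R_j^*(W_1)$. The reason is that every measurable function of $W_1$ is also a measurable function of $W_2$, so the infimum for $W_2$ runs over a larger class.
\begin{itemize}
\item The right inequality follows at once, because $X$ is a function of $(X,C)$.
\item For the left inequality, I need $\sigma(X,C)\subseteq\sigma(X,Z)$. This holds because $C=T_jIP(Z)$ is a composition of measurable maps of $Z$: $P$ is measurable by the conventions of Appendix~\ref{app:theory-conventions}, and $I$ and $T_j$ are linear, hence continuous.
\end{itemize}
Consequently $(X,C)=g(X,Z)$ for a measurable $g$, and any predictor $f(X,C)$ can be written as $(f\circ g)(X,Z)$.

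The only delicate point is checking that the infimum is attained by the conditional expectation over the right function class, including measurability of compositions. I expect this to be routine given the stated measurability assumptions. The cleanest route is the factorization (Doob–Dynkin) view used above, which avoids any issue with the infimum being taken over functions of $W$ rather than over $\sigma(W)$-measurable random variables.
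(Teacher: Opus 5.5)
Your proposal is correct and follows essentially the same route as the paper. The $L^2$ projection identity gives $\mathcal R_j^*(W)=\mathbb E\|Y-\mathbb E[Y\mid W]\|_2^2$, the split $Y-m_1=(Y-m_2)+(m_2-m_1)$ with $\sigma(X)\subseteq\sigma(X,C)$ gives the first identity, and the measurability of $C=T_jIP(Z)$ gives $\sigma(X,C)\subseteq\sigma(X,Z)$. The one minor difference is in the inequalities: you obtain them from monotonicity of the infimum over nested function classes, using explicit factorization. The paper instead reapplies the orthogonal decomposition to write the gap $\mathcal R_j^*(X,C)-\mathcal R_j^*(X,Z)$ as $\mathbb E\|\mathbb E[Y\mid X,Z]-\mathbb E[Y\mid X,C]\|_2^2\ge 0$. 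Both are valid, and your version needs no conditional-expectation argument for that step.
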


\paragraph{Proof.}
Let $m_W=\mathbb E[Y\mid W]$.
For any square-integrable $W$-measurable
prediction $f(W)$, conditional expectation gives
\begin{equation}
 \mathbb E\|Y-f(W)\|_2^2
 =\mathbb E\|Y-m_W\|_2^2
  +\mathbb E\|m_W-f(W)\|_2^2.
 \label{eq:theory-projection}
\end{equation}
Indeed, the cross term is zero because
$\mathbb E[Y-m_W\mid W]=0$.
Predictions of infinite risk cannot improve the
infimum, so $\mathcal R_j^*(W)=\mathbb E\|Y-m_W\|_2^2$.

Now let $m_X=\mathbb E[Y\mid X]$ and
$m_{XC}=\mathbb E[Y\mid X,C]$.
Decomposing
$Y-m_X=(Y-m_{XC})+(m_{XC}-m_X)$ yields
\begin{equation}
 \mathcal R_j^*(X)-\mathcal R_j^*(X,C)
 =\mathbb E\|m_{XC}-m_X\|_2^2\ge0.
 \label{eq:theory-information-gain}
\end{equation}
Since $P$ is measurable and $I$ and $T_j$ are linear, $C$ is a measurable function of $Z$.
Consequently,
$\sigma(X)\subseteq\sigma(X,C)\subseteq\sigma(X,Z)$, and the same argument
gives
\begin{align}
 \mathcal R_j^*(X,C)-\mathcal R_j^*(X,Z)
 &=\mathbb E\left\|
     \mathbb E[Y\mid X,Z]-\mathbb E[Y\mid X,C]
   \right\|_2^2\ge0,\label{eq:theory-information-loss}\\
 \mathcal R_j^*(X,Z)&\le \mathcal R_j^*(X,C)\le \mathcal R_j^*(X).
 \label{eq:theory-information-order}
\end{align}
If the patch index is random, its observed value $J$ must be included in all
conditioning variables for this argument.

Strict improvement over $X$ occurs exactly when the two conditional means in
\eqref{eq:theory-information-gain} differ with positive probability.
This is a
condition on the conditional mean, not an equivalence with full conditional
independence.
The variable $Z$ in \eqref{eq:theory-information-order} is the
complete current coarse field; replacing it by a restricted coarse patch changes
the information comparison.
A forecast can expose relevant nonlocal information
to a patch-only predictor, but cannot create information beyond $(X,Z)$.

\paragraph{When a cropped current coarse field retains the complete observation.}
Assume a Cartesian Fourier band with $b_\ell\le N$ consecutive integer frequencies modulo $N$ in axis $\ell$, and let $I$ be a linear bijection from coarse coordinates onto this band space.
If a rectangular patch contains $p_\ell\ge b_\ell$ consecutive fine-grid samples in every axis, then $T_jI$ is injective in exact arithmetic.
Consequently, for $C_j^0=T_jIZ$,
\begin{equation}
 \mathcal R_j^*(X_j,C_j^0)=\mathcal R_j^*(X_j,Z)\le \mathcal R_j^*(X_j,C_j).
 \label{eq:band-crop-information}
\end{equation}
\emph{Proof.}
Take the first $b_\ell$ patch samples in each axis.
The one-dimensional evaluation matrix has entries
$\exp(2\pi\mathrm{i}(k_{0,\ell}+k)(x_{0,\ell}+t)/N)$ for $t,k=0,\ldots,b_\ell-1$.
Factoring the row and column phases leaves a Vandermonde matrix on the distinct nodes $\exp(2\pi\mathrm{i}t/N)$, so this matrix is invertible.
The tensor product is invertible, giving a linear inverse on the range of $T_jI$.
The result also holds on the conjugate-symmetric subspace representing real fields.
Thus $Z$ is measurable from $C_j^0$, proving the equality; the inequality follows from Lemma~\ref{prop:information}.

Here $b_\ell$ counts modes in the interpolation output, not a coarsening stride or the modes retained inside an FNO layer.

\paragraph{Coarse-prediction decomposition.}
Let $Z^+=DV$, $m_{\mathrm c}(Z)=\mathbb E[Z^+\mid Z]$, and
$\sigma_{\mathrm{cl}}^2:=\mathbb E\|Z^+-m_{\mathrm c}(Z)\|_R^2$.
Then $Z^+$ and $m_{\mathrm c}(Z)$ are square-integrable and $\sigma_{\mathrm{cl}}<\infty$, since $D$ is linear and $V$ has a finite second moment.
For any square-integrable Parent output, the same orthogonality argument in the normalized coarse norm yields
\begin{equation}
 \mathbb E\|P(Z)-Z^+\|_R^2
 =\mathbb E\|Z^+-\mathbb E[Z^+\mid Z]\|_R^2
 +\mathbb E\|P(Z)-\mathbb E[Z^+\mid Z]\|_R^2.
 \label{eq:theory-coarse-decomposition}
\end{equation}
The first term need not vanish: the chosen coarse observation need not determine
the coarse future.

\paragraph{What the coarse residual includes.}
The formulation permits a future that is not determined by the observed $U$.
For square-integrable $Z^+=DV$, let $m_{\mathrm c}^U=\mathbb E[Z^+\mid U]$ and use $m_{\mathrm c}(Z)$ defined above.
Since $DU$ is a function of $U$, conditional projection gives
\begin{equation}
 \sigma_{\mathrm{cl}}^2
 =\underbrace{\mathbb E\|Z^+-m_{\mathrm c}^U\|_R^2}_{\text{uncertainty given the full observation}}
 +\underbrace{\mathbb E\|m_{\mathrm c}^U-m_{\mathrm c}(Z)\|_R^2}_{\text{additional loss from coarse observation}}.
 \label{eq:coarse-uncertainty-split}
\end{equation}
Indeed, $Z^+-m_{\mathrm c}(Z)=(Z^+-m_{\mathrm c}^U)+(m_{\mathrm c}^U-m_{\mathrm c}(Z))$; the cross term vanishes
because $m_{\mathrm c}^U-m_{\mathrm c}(Z)$ is $U$-measurable and $\mathbb E[Z^+-m_{\mathrm c}^U\mid U]=0$.
When $V=\mathcal S_{\Delta t}(U)$, the first term is zero, and the
coarse residual is entirely due to replacing $U$ by $DU$.
Otherwise the first term can be positive even when coarse graining loses no
information relevant to the conditional mean.
For example, if $U=0$ and
$V=H\mathbf1$ for a symmetric random sign $H$, mean restriction gives
first term one and second term zero.
The Parent-error identity remains valid in both cases.

\subsection{Weighted assembly}
\label{app:theory-assembly}

Assume a finite patch family with fixed weights satisfying
$a_j(x)\ge0$, $a_j(x)=0$ outside $\Omega_j$, and
$\sum_j a_j(x)=1$ at every voxel.
Define
\begin{equation}
 (A\mathbf v)(x)=\sum_j a_j(x)v_j(x),
 \qquad
 \|\mathbf v\|_a^2
 =\frac{1}{qn}\sum_x\sum_j a_j(x)\|v_j(x)\|_2^2,
 \qquad Tv=(T_jv)_j.
 \label{eq:theory-assembly-def}
\end{equation}
Only values on patch supports enter these expressions.
The quantity
$\|\cdot\|_a$ is a seminorm if some patch coordinates have zero weight.

\begin{lemma}[Weighted assembly]
\label{lem:assembly}
Under the fixed nonnegative partition-of-unity weights above, for every full field $v$ and patch collection $\mathbf v$,
\begin{equation}
 A Tv=v,
 \qquad \|Tv\|_a=\|v\|_N,
 \qquad \|A\mathbf v\|_N\le\|\mathbf v\|_a.
 \label{eq:theory-assembly-lemma}
\end{equation}
\end{lemma}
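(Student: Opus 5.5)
The plan is to verify the three claims pointwise at each voxel $x$ and then sum with the factor $1/(qn)$. The key observation is that $(T_jv)(x)=v(x)$ whenever $x\in\Omega_j$. If $x\notin\Omega_j$, the patch $T_jv$ has no value at $x$, but this does not matter because $a_j(x)=0$ there. With the convention that $a_j(x)v_j(x)=0$ off $\Omega_j$, every sum over $j$ at a fixed voxel reduces to a sum over the patches containing $x$.

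\emph{Exact reconstruction.} Compute $(ATv)(x)=\sum_j a_j(x)(T_jv)(x)$. Every nonzero term has $x\in\Omega_j$, so the sum equals $\sum_j a_j(x)v(x)$. The partition of unity $\sum_j a_j(x)=1$ then gives $v(x)$.

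\emph{Isometry.} By the same argument,
\[
\sum_j a_j(x)\|(T_jv)(x)\|_2^2=\Big(\sum_j a_j(x)\Big)\|v(x)\|_2^2=\|v(x)\|_2^2.
\]
Summing over $x$ and dividing by $qn$ gives $\|Tv\|_a^2=\|v\|_N^2$.

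\emph{Contraction.} This is the only step that needs more than bookkeeping. At each voxel, $(A\mathbf v)(x)=\sum_j a_j(x)v_j(x)$ is a convex combination of the vectors $v_j(x)\in\mathbb R^q$: the weights are nonnegative and sum to one. Because $\|\cdot\|_2^2$ is convex on $\mathbb R^q$, Jensen's inequality gives
\[
\Big\|\sum_j a_j(x)v_j(x)\Big\|_2^2\le\sum_j a_j(x)\|v_j(x)\|_2^2.
\]
Equivalently, one can apply Cauchy--Schwarz to $\sum_j\sqrt{a_j}\cdot\sqrt{a_j}\,v_j$ and use $\sum_j a_j=1$. Summing over $x$ and normalizing by $1/(qn)$ gives $\|A\mathbf v\|_N^2\le\|\mathbf v\|_a^2$, and taking square roots finishes the proof.

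Two points need care. First, patch coordinates with zero weight do not enter either side, which is consistent with $\|\cdot\|_a$ being only a seminorm. Second, with periodic wrap-around a voxel may belong to a patch through a wrapped index. If a patch could contain the same voxel more than once, the identification of $(T_jv)(x)$ with $v(x)$ would need a sum over copies; since the patch sizes are smaller than $N$, this does not occur, and I will note this explicitly.
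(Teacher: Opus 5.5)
Your proposal is correct and follows the paper's proof essentially step for step: you check the first two identities pointwise using $(T_jv)(x)=v(x)$ on $\Omega_j$ and $\sum_j a_j(x)=1$, and you get the inequality from convexity of the squared Euclidean norm at each voxel, followed by summation and division by $qn$. Your remarks on zero-weight coordinates and on periodic wrap-around are harmless extra care; the paper leaves these implicit by treating $\Omega_j$ as a set of voxels.
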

\emph{Proof.}
The first equality follows pointwise from
$\sum_j a_j(x)v(x)=v(x)$.
The second follows by inserting $T_jv$ in
\eqref{eq:theory-assembly-def} and summing the weights.
For the last inequality,
convexity of the squared Euclidean norm gives, at each $x$,
\[
 \left\|\sum_j a_j(x)v_j(x)\right\|_2^2
 \le\sum_j a_j(x)\|v_j(x)\|_2^2.
\]
Summation and division by $qn$ complete the proof.

In particular, the assembled prediction error is bounded by the coverage-weighted
patch error.

\subsection{Parent error and the coarse closure residual}
\label{app:eps-parent}
Direct supervision of the Parent does not make coarse evolution closed.
For a square-integrable forecast, let $m_{\mathrm c}(z)=\mathbb E[Z^+\mid Z=z]$.
The conditional-mean projection identity gives
\begin{equation}
 \epsilon_P^2:=\mathbb E\|P_\phi(Z)-Z^+\|_R^2
 =\sigma_{\mathrm{cl}}^2+\mathbb E\|P_\phi(Z)-m_{\mathrm c}(Z)\|_R^2.
 \label{eq:eps-parent}
\end{equation}
The terms separate predictive uncertainty given the coarse observation from error relative to its conditional mean.
We therefore pass the forecast as a condition whose errors the Child can correct in its output, rather than requiring the output to reproduce it.

\subsection{One-step error decomposition and Parent replacement}
\label{sec:analysis}\label{app:analysis}
This section quantifies how forecast errors enter the assembled field.
We fix the trained maps, preprocessing, assembly weights, and evaluation law, and compare the deployed predictor with the same Child supplied with the true coarse future.
Using $X_j=T_jU$, $C_j=c_j(U)$, and $C_j^*=T_jIZ^+=T_jIDV$, define
\begin{equation}
 F_*(U,V)=A\big((C_\vartheta(X_j,C_j^*))_j\big).\label{eq:reference-map}
\end{equation}
The target-dependent $F_*$ is a diagnostic, not a deployable predictor or an unconditional risk lower bound.
The deployed error separates exactly into the diagnostic residual and the change induced by the forecast:
\begin{equation}
 F(U)-V
 =\underbrace{F_*(U,V)-V}_{b}
  +\underbrace{F(U)-F_*(U,V)}_{d}.
 \label{eq:error-telescope}
\end{equation}
We summarize the two terms and their interaction by
\begin{equation}
 \rho^2=\mathbb E\|b\|_N^2,\qquad
 \eta^2=\mathbb E\|d\|_N^2,\qquad
 \chi=\mathbb E\langle b,d\rangle_N.
 \label{eq:residual-components}
\end{equation}

\begin{samepage}
\begin{theorem}[One-step decomposition and conditional stability]
\label{thm:composition}
Assume the fixed weights satisfy Equation~\eqref{eq:assembly}, $I$ is linear, and $\rho,\epsilon_P<\infty$.
For each patch, suppose a finite deterministic $\ell_j$ satisfies, for $\mu$-almost every $(U,V)$,
\begin{equation}
 \|C_\vartheta(X_j,C_j)-C_\vartheta(X_j,C_j^*)\|_2
 \le\ell_j\|C_j-C_j^*\|_2,
 \label{eq:lipschitz}
\end{equation}
The norms here are ordinary Euclidean patch norms, and the constants apply to the actual/oracle condition pair just defined.
Set $\kappa_I=\|I\|_{R\to N}$, $a_j^{\max}=\max_xa_j(x)$, and
\begin{equation}
 L_{\mathrm{cov}}^2=\max_x\sum_{j:x\in\Omega_j}a_j^{\max}\ell_j^2.
 \label{eq:coverage}
\end{equation}
Then
\begin{align}
 \mathcal R_\mu(F)&=\rho^2+\eta^2+2\chi,\label{eq:exact-decomposition}\\
 \sqrt{\mathcal R_\mu(F)}&\le\rho+\eta
 \le\rho+L_{\mathrm{cov}}\kappa_I\epsilon_P.\label{eq:composition-bound}
\end{align}
\end{theorem}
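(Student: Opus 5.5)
The exact identity \eqref{eq:exact-decomposition} follows by expanding the squared norm of the telescoped error \eqref{eq:error-telescope}. Pointwise,
\[
\|b+d\|_N^2=\|b\|_N^2+\|d\|_N^2+2\langle b,d\rangle_N .
\]
Take expectations. Each term is integrable: $\rho<\infty$ by assumption, $\eta<\infty$ will follow from the bound below, and then $\chi$ is finite by Cauchy--Schwarz. This gives $\mathcal R_\mu(F)=\rho^2+\eta^2+2\chi$.

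For the first inequality in \eqref{eq:composition-bound}, apply Cauchy--Schwarz to the cross term, $|\chi|\le\rho\eta$. Then $\mathcal R_\mu(F)\le(\rho+\eta)^2$. Equivalently, use Minkowski's inequality in $L^2(\mu;\mathcal H_N)$.

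The main step is bounding $\eta$. Write
\[
d=A\big((C_\vartheta(X_j,C_j)-C_\vartheta(X_j,C_j^*))_j\big)=A\boldsymbol\delta ,
\]
using that both predictors share the same fixed assembly and that $A$ is linear in the patch collection. By Lemma~\ref{lem:assembly}, $\|d\|_N^2\le\|\boldsymbol\delta\|_a^2=\frac1{qn}\sum_x\sum_j a_j(x)\|\delta_j(x)\|_2^2$. Bound $a_j(x)\le a_j^{\max}$ and sum over $x\in\Omega_j$. This gives $\frac1{qn}\sum_j a_j^{\max}\|\delta_j\|_2^2$, and the Lipschitz hypothesis \eqref{eq:lipschitz} bounds it by $\frac1{qn}\sum_j a_j^{\max}\ell_j^2\|C_j-C_j^*\|_2^2$.

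Now $C_j-C_j^*=T_jI(P_\phi(Z)-Z^+)$, since $I$ is linear. Hence
\[
\|C_j-C_j^*\|_2^2=\sum_{x\in\Omega_j}\|(Ie)(x)\|_2^2,\qquad e=P_\phi(Z)-Z^+ .
\]
Exchange the sums over $j$ and $x$:
\[
\frac1{qn}\sum_x\|(Ie)(x)\|_2^2\sum_{j:x\in\Omega_j}a_j^{\max}\ell_j^2\le L_{\mathrm{cov}}^2\|Ie\|_N^2\le L_{\mathrm{cov}}^2\kappa_I^2\|e\|_R^2 .
\]
Taking expectations and using \eqref{eq:eps-parent}, which defines $\epsilon_P^2=\mathbb E\|e\|_R^2$, gives $\eta\le L_{\mathrm{cov}}\kappa_I\epsilon_P$. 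This also confirms $\eta<\infty$.

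I expect the obstacle to be bookkeeping rather than depth. The first point is the order of summation: replacing $a_j(x)$ by $a_j^{\max}$ must happen before the sum over $j$ is moved inside the $x$-sum, which is precisely what makes the coverage constant \eqref{eq:coverage} appear. The second point is consistency of normalizations: the $\frac1{qn}$ factor of $\|\cdot\|_N$ versus the unnormalized patch norms in \eqref{eq:lipschitz}, and $\kappa_I$ taken as the operator norm between the normalized norms $\|\cdot\|_R$ and $\|\cdot\|_N$. Finally, measurability and almost-sure validity of \eqref{eq:lipschitz} are needed to pass to expectations.
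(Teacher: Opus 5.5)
Your proposal is correct and follows the paper's proof essentially step for step. You use linearity of the fixed assembly together with Lemma~\ref{lem:assembly} to get $\|d\|_N^2\le\|\boldsymbol\delta\|_a^2$, replace $a_j(x)$ by $a_j^{\max}$ before the patch-level sensitivity step, exchange the $j$- and $x$-sums to produce $L_{\mathrm{cov}}$, and then apply $\kappa_I$. Finally, you expand in $L^2(\mu;\mathcal H_N)$ with Cauchy--Schwarz, which is equivalent to the triangle inequality the paper uses.
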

\par\end{samepage}
The proof first controls each patch response to changing its condition, then uses the nonnegative assembly weights and the triangle inequality (Appendix~\ref{app:theory-composition}).
A small reference residual, an accurate Parent, and controlled Child sensitivity are sufficient to make this upper bound small.
They are not necessary for small actual error, since the two residuals can cancel.
Fixed normalization scales are included in the physical maps and hence in their sensitivity constants.

The exact identity also matters: $\chi$ can have either sign, so $\rho^2$ and $\eta^2$ are not additive causal shares of the total error.
Paired held-out predictions with actual and true coarse-future conditions, compared in the norm of Equation~\eqref{eq:theory-norms}, yield empirical estimates of $\rho,\eta,\chi$.

\paragraph{Parent replacement.}
Separate supervision permits a new Parent to be connected to the fixed Child without changing its weights.
For two compatible Parents and the same fixed Child, assume finite composed risks and the direct new/old sensitivity condition of Corollary~\ref{cor:replacement}, with $L_{\mathrm{cov}}$ formed from its constants.
With $\delta_P^2=\mathbb E\|P_{\mathrm{new}}(DU)-P_{\mathrm{old}}(DU)\|_R^2<\infty$,
\begin{equation}
 \left|\sqrt{\mathcal R_\mu(F_{\mathrm{new}})}-\sqrt{\mathcal R_\mu(F_{\mathrm{old}})}\right|
 \le L_{\mathrm{cov}}\kappa_I\delta_P.
 \label{eq:replacement-stability}
\end{equation}
Corollary~\ref{cor:replacement} shows that a Parent close to the old one changes the error of the fixed Child only slightly.
We compare complete-field predictions on paired inputs and account for additional Parent data and training separately.

\subsection{Composition bound}
\label{app:theory-composition}

\paragraph{Assumptions and restatement.}
Use the weights and norms above, and let
\begin{equation}
 \kappa_I=\sup_{z\ne0}\frac{\|Iz\|_N}{\|z\|_R}<\infty.
 \label{eq:theory-interpolation-constant}
\end{equation}
For the raw coordinate matrix $I_{\mathrm{mat}}$, the common channel count gives
$\kappa_I=\sqrt{r/n}\,\|I_{\mathrm{mat}}\|_{2\to2}$.
No isometry property is assumed for the implementation's interpolation.

Use the ordinary Euclidean norm on each patch, as in Theorem~\ref{thm:composition}; the field normalization enters when patch errors are assembled.
Assume deterministic finite constants $\ell_j$ such that, for
$\mu$-almost every $(U,V)$ and
$(c,c')=(T_jIP(DU),T_jIDV)$,
\begin{equation}
 \|C_\vartheta(T_jU,c)-C_\vartheta(T_jU,c')\|_2
 \le\ell_j\|c-c'\|_2.
 \label{eq:theory-child-lipschitz}
\end{equation}
Only these paired conditions are needed for the theorem.
If the inequality is
established by integrating a Jacobian bound, that bound must also cover the
connecting paths.
Define
\begin{equation}
 a_j^{\max}=\max_x a_j(x),
 \qquad
 L_{\mathrm{cov}}^2
 =\max_x\sum_{j:x\in\Omega_j}a_j^{\max}\ell_j^2,
 \label{eq:theory-coverage-constant}
\end{equation}
and define the Parent error and a supplementary patch-residual quantity:
\begin{align}
 \epsilon_P^2
 &=\mathbb E\|P(DU)-DV\|_R^2,
 \label{eq:theory-parent-error}\\
 \epsilon_C^2
 &=\mathbb E\left\|
   \big(C_\vartheta(T_jU,T_jIDV)-T_jV\big)_j
   \right\|_a^2.
 \label{eq:theory-child-oracle-error}
\end{align}
Theorem~\ref{thm:composition} assumes $\epsilon_P<\infty$ and $\rho<\infty$, with $\rho$ defined below; it does not require $\epsilon_C<\infty$.
When finite, $\epsilon_C$ gives an additional upper bound on $\rho$.
It is the oracle-conditioned residual of the
\emph{fixed} Child.
It is neither an optimized Bayes risk nor an assumed lower
bound on the deployed error.
Define the assembled diagnostic and residuals by
\begin{align}
 F_*(U,V)&=A\big((C_\vartheta(T_jU,T_jIDV))_j\big),\nonumber\\
 b&=F_*(U,V)-V,\qquad d=F_P(U)-F_*(U,V),
 \label{eq:theory-residual-components}\\
 \rho^2&=\mathbb E\|b\|_N^2,\qquad
 \eta^2=\mathbb E\|d\|_N^2,\qquad
 \chi=\mathbb E\langle b,d\rangle_N.\nonumber
\end{align}
The diagnostic $F_*$ depends on the target and is not a deployable predictor.
Equation~\eqref{eq:theory-child-lipschitz} is the same paired Euclidean sensitivity condition used in Theorem~\ref{thm:composition}.
The proof of Theorem~\ref{thm:composition} also gives the following exact decomposition and stability bounds
\begin{align}
 \mathcal R_\mu(F_P)&=\rho^2+\eta^2+2\chi,
 \label{eq:theory-exact-decomposition}\\
 \rho&\le\epsilon_C,\qquad
 \eta\le L_{\mathrm{cov}}\kappa_I\epsilon_P,
 \label{eq:theory-component-bounds}\\
 |\rho-\eta|\le\sqrt{\mathcal R_\mu(F_P)}
 &\le\rho+\eta\le\rho+L_{\mathrm{cov}}\kappa_I\epsilon_P.
 \label{eq:theory-composition-bound}
\end{align}

\paragraph{Proof of Theorem~\ref{thm:composition}.}
Pointwise in $(U,V)$, write $\widehat z^+=P(DU)$, $z^+=DV$, and
\begin{equation}
 \delta_j=C_\vartheta(T_jU,T_jI\widehat z^+)-C_\vartheta(T_jU,T_jIz^+).
 \label{eq:theory-oracle-assembly}
\end{equation}
By the weighted-assembly lemma and $a_j(x)\le a_j^{\max}$,
\begin{align}
 \|d\|_N^2
 &=\|F_P(U)-F_*(U,V)\|_N^2\nonumber\\
 &\le\|(\delta_j)_j\|_a^2\nonumber\\
 &\le\frac{1}{qn}\sum_j a_j^{\max}\|\delta_j\|_2^2\nonumber\\
 &\le\frac{1}{qn}\sum_j a_j^{\max}\ell_j^2\|T_jI(\widehat z^+-z^+)\|_2^2\nonumber\\
 &=\frac{1}{qn}\sum_x
     \left(\sum_{j:x\in\Omega_j}a_j^{\max}\ell_j^2\right)
     \|I(\widehat z^+-z^+)(x)\|_2^2\nonumber\\
 &\le L_{\mathrm{cov}}^2\|I(\widehat z^+-z^+)\|_N^2
 \le L_{\mathrm{cov}}^2\kappa_I^2\|\widehat z^+-z^+\|_R^2.
 \label{eq:theory-condition-perturbation}
\end{align}
The Lipschitz step uses \eqref{eq:theory-child-lipschitz} and the linearity of $I$.
Also, $A TV=V$ implies
\begin{equation}
 \|b\|_N^2=\|F_*(U,V)-V\|_N^2
 \le\left\|\big(C_\vartheta(T_jU,T_jIDV)-T_jV\big)_j\right\|_a^2.
 \label{eq:theory-oracle-residual-bound}
\end{equation}
Taking expectations gives
$\eta\le L_{\mathrm{cov}}\kappa_I\epsilon_P$ and $\rho\le\epsilon_C$.
The sensitivity bound and $\epsilon_P<\infty$ imply $d\in L^2(\mu;\mathcal H_N)$; the assumption $\rho<\infty$ gives the same property for $b$.
Thus
Cauchy--Schwarz gives $|\chi|\le\rho\eta<\infty$.
Since
$F_P(U)-V=b+d$, expansion in this Hilbert space yields
\[
 \mathcal R_\mu(F_P)=\mathbb E\|b+d\|_N^2
 =\mathbb E\|b\|_N^2+\mathbb E\|d\|_N^2
   +2\mathbb E\langle b,d\rangle_N
 =\rho^2+\eta^2+2\chi.
\]
The triangle and reverse triangle inequalities in the same space give
\[
 |\rho-\eta|
 \le\|b+d\|_{L^2(\mu;\mathcal H_N)}
 \le\rho+\eta
 \le\rho+L_{\mathrm{cov}}\kappa_I\epsilon_P,
\]
which completes the proof.
If $\epsilon_C<\infty$, one may further replace $\rho$ by $\epsilon_C$ in the upper bound.
The explicit finiteness of $\rho$ and $\epsilon_P$ justifies the $L^2$ argument; finite moments of the inputs alone would not suffice
for arbitrary nonlinear predictors.

The cross term $\chi$ can have either sign.
Thus $\rho^2$ and $\eta^2$ are not
additive causal shares of the actual error.
Paired evaluations of the same fixed
Child can compute empirical versions of all three terms.

\paragraph{Role of coverage and sensitivity.}
If $\ell_j\le\ell$ and no voxel belongs to more than $m_{\max}$ patches, then
$L_{\mathrm{cov}}\le\ell\sqrt{m_{\max}}$.
For constant multiplicity $m$ with
$a_j(x)=1/m$ on each patch, the sharper definition gives
$L_{\mathrm{cov}}\le\ell$.
The proof deliberately uses ordinary patch norms rather than assuming that the
Child is Lipschitz in the coverage-weighted seminorm.
A patch coordinate with zero
assembly weight can still influence positive-weight outputs through the network.

\subsection{Parent replacement}
\label{app:theory-replacement}

\begin{corollary}[Fixed-Child replacement stability]
\label{cor:replacement}
Fix the Child, linear interpolation, evaluation law, and assembly weights
satisfying Equation~\eqref{eq:assembly}.
For two compatible Parents, write $F_s=F_{P_s}$ for $s\in\{\mathrm{old},\mathrm{new}\}$ and assume
$\mathcal R_\mu(F_{\mathrm{old}}),\mathcal R_\mu(F_{\mathrm{new}})<\infty$.
Write $c_j^s=T_jIP_s(DU)$ for $s\in\{\mathrm{old},\mathrm{new}\}$.
Suppose finite deterministic $\ell_j$ satisfy, for $\mu$-almost every $(U,V)$,
\[
 \|C_\vartheta(T_jU,c_j^{\mathrm{new}})
   -C_\vartheta(T_jU,c_j^{\mathrm{old}})\|_2
 \le\ell_j\|c_j^{\mathrm{new}}-c_j^{\mathrm{old}}\|_2.
\]
This is a condition on the direct new/old pair.
Define $L_{\mathrm{cov}}$
using these constants in Equation~\eqref{eq:theory-coverage-constant}, and let
\begin{equation}
 \delta_P^2
 =\mathbb E\|P_{\mathrm{new}}(DU)-P_{\mathrm{old}}(DU)\|_R^2<\infty.
 \label{eq:theory-parent-disagreement}
\end{equation}
Then
\begin{align}
 \big|\sqrt{\mathcal R_\mu(F_{\mathrm{new}})}-\sqrt{\mathcal R_\mu(F_{\mathrm{old}})}\big|
 &\le\|F_{\mathrm{new}}-F_{\mathrm{old}}\|_{L^2(\mu;\mathcal H_N)}\nonumber\\
 &\le L_{\mathrm{cov}}\kappa_I\delta_P.
 \label{eq:theory-replacement-stability}
\end{align}
\end{corollary}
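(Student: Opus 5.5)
The proof reuses the perturbation chain from the proof of Theorem~\ref{thm:composition}, applied to the new/old pair of conditions in place of the actual/oracle pair. Pointwise in $(U,V)$, set $\widehat z_s=P_s(DU)$ for $s\in\{\mathrm{old},\mathrm{new}\}$ and
\[
 \delta_j=C_\vartheta(T_jU,c_j^{\mathrm{new}})-C_\vartheta(T_jU,c_j^{\mathrm{old}}).
\]
Since the Child, the patches, and the weights are shared, $F_{\mathrm{new}}(U)-F_{\mathrm{old}}(U)=A((\delta_j)_j)$ by linearity of $A$ in the patch collection.

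\textbf{Step 1 (pointwise bound on the Parent-induced change).} By Lemma~\ref{lem:assembly}, $\|A(\delta_j)_j\|_N^2\le\|(\delta_j)_j\|_a^2$. Bounding $a_j(x)\le a_j^{\max}$ gives
\[
 \|(\delta_j)_j\|_a^2\le\frac1{qn}\sum_j a_j^{\max}\|\delta_j\|_2^2 .
\]
The assumed direct new/old sensitivity then yields $\|\delta_j\|_2\le\ell_j\|T_jI(\widehat z_{\mathrm{new}}-\widehat z_{\mathrm{old}})\|_2$, where the linearity of $T_jI$ is used to write the difference of conditions this way. Exchanging the order of the sums over $j$ and $x$ and using the definition of $L_{\mathrm{cov}}$ gives
\[
 \|F_{\mathrm{new}}(U)-F_{\mathrm{old}}(U)\|_N^2\le L_{\mathrm{cov}}^2\|I(\widehat z_{\mathrm{new}}-\widehat z_{\mathrm{old}})\|_N^2\le L_{\mathrm{cov}}^2\kappa_I^2\|\widehat z_{\mathrm{new}}-\widehat z_{\mathrm{old}}\|_R^2 .
\]
This holds for $\mu$-almost every $(U,V)$. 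Taking expectations and square roots gives the second inequality, $\|F_{\mathrm{new}}-F_{\mathrm{old}}\|_{L^2(\mu;\mathcal H_N)}\le L_{\mathrm{cov}}\kappa_I\delta_P$.

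\textbf{Step 2 (reverse triangle inequality).} The finite risks place $F_s(U)-V$ in $L^2(\mu;\mathcal H_N)$, and Step~1 together with $\delta_P<\infty$ places their difference there as well. Since $\sqrt{\mathcal R_\mu(F_s)}=\|F_s(U)-V\|_{L^2}$, the reverse triangle inequality in this Hilbert space gives
\[
 \big|\|F_{\mathrm{new}}-V\|-\|F_{\mathrm{old}}-V\|\big|\le\|F_{\mathrm{new}}-F_{\mathrm{old}}\|,
\]
which is the first inequality of the corollary.

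\textbf{Main obstacle.} There is no deep obstacle here; the care lies in the bookkeeping. First, the sensitivity hypothesis must be applied exactly to the pair $(c_j^{\mathrm{new}},c_j^{\mathrm{old}})$; it is stated only for that pair, so it cannot be traded for a path-wise bound. Second, every term must have finite second moment before expectations are expanded, which is why the finiteness of both risks and of $\delta_P$ is assumed.
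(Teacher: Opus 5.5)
Your proposal is correct and matches the paper's proof: you apply the weighted-assembly and sensitivity chain of Equation~\eqref{eq:theory-condition-perturbation} to the direct new/old forecast pair, take expectations and square roots to get the second inequality, and then use the reverse triangle inequality in $L^2(\mu;\mathcal H_N)$ for the first.
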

\emph{Proof.}
Apply the weighted-assembly calculation in Equation~\eqref{eq:theory-condition-perturbation} to the direct pair of forecasts $P_{\mathrm{new}}(DU)$ and $P_{\mathrm{old}}(DU)$.
This calculation uses the assumed new/old sensitivity condition and gives
\[
 \|F_{\mathrm{new}}(U)-F_{\mathrm{old}}(U)\|_N^2
 \le L_{\mathrm{cov}}^2\kappa_I^2
 \|P_{\mathrm{new}}(DU)-P_{\mathrm{old}}(DU)\|_R^2.
\]
Taking expectations and square roots proves the second inequality.
Applying
the reverse triangle inequality to
$F_{\mathrm{new}}(U)-V$ and $F_{\mathrm{old}}(U)-V$ in
$L^2(\mu;\mathcal H_N)$ proves the first.
Forecast disagreement can be evaluated on inputs without future labels.

\paragraph{Exact risk change.}
Let
$e=F_{\mathrm{old}}(U)-V$ and
$\Delta=F_{\mathrm{new}}(U)-F_{\mathrm{old}}(U)$.
Expanding the square gives the
exact identity
\begin{equation}
 \mathcal R_\mu(F_{\mathrm{new}})-\mathcal R_\mu(F_{\mathrm{old}})
 =2\mathbb E\langle e,\Delta\rangle_N
  +\mathbb E\|\Delta\|_N^2.
 \label{eq:theory-replacement-identity}
\end{equation}
Thus strict improvement holds if and only if
\begin{equation}
 \mathbb E\langle e,\Delta\rangle_N
 <-\frac12\mathbb E\|\Delta\|_N^2.
 \label{eq:theory-replacement-criterion}
\end{equation}